\newtheorem{theorem}{Theorem}
\newtheorem{definition}{Definition}
\newtheorem{prop}{Proposition}
\newtheorem{lemma}{Lemma}
\newtheorem{example}{Example}
\newtheorem{remark}{Remark}
\newtheorem{cor}{Corollary}
\newcommand{\mb}[1]{\mathbf{#1}}
\renewcommand{\xi}{\mathbf{x}^{i}}
\newcommand{\E}{\mathop{\mathbb{\Large E}}}
\newcommand{\Eeps}{\ensuremath{\E_{\varepsilon(\xi)}}}
\newcommand{\T}{\ensuremath{\top}}
\newcommand{\Unit}{\mathcal{I}}
\newcommand{\Lrec}{L_{\text{rec}}}
\newcommand{\diagentry}[1]{\mathmakebox[1.8em]{#1}}
\DeclareMathOperator*{\argmax}{arg\,max}
\DeclareMathOperator*{\mi}{MI}
\newcommand{\proofof}[2]{
\noindent {\it Proof of #1.}\hspace{0.1em} #2
\qed
\medskip
}
\DeclareRobustCommand\onedot{\futurelet\@let@token\@onedot}
\def\@onedot{\ifx\@let@token.\else.\null\fi\xspace}
\def\iid{{i.i.d}\onedot}
\def\eg{{e.g}\onedot} 
\def\ie{{i.e}\onedot}
\DeclarePairedDelimiterX{\KL}[2]{D_{\mathrm{KL}}(}{)}{%
  #1\;\delimsize\|\;#2%
}
\DeclareMathOperator{\Enc}{Enc_{\varphi}}
\DeclareMathOperator{\Dec}{Dec_{\theta}}
\DeclareMathOperator{\diag}{diag}
\DeclareMathOperator{\var}{var}
\DeclareMathOperator{\col}{col_\Pi}
\DeclareMathOperator{\psdet}{det^\dagger}
\DeclareMathOperator{\tr}{tr}
\newcommand*\dd{\mathop{}\!\mathrm{d}}
\begin{document}

\title{Variational Autoencoders Pursue PCA Directions (by Accident)}

\author{Michal Rol\'{i}nek\footnotemark[1]~, Dominik Zietlow\footnotemark[1]~ and Georg Martius\\
Max-Planck-Institute for Intelligent Systems, 
T\"ubingen, Germany\\
{\tt\small \{mrolinek, dzietlow, gmartius\}@tue.mpg.de}}

\maketitle

\begin{abstract}
The Variational Autoencoder (VAE) is a powerful architecture capable of representation learning and generative modeling. When it comes to learning interpretable (disentangled) representations, VAE and its variants show unparalleled performance.
However, the reasons for this are unclear, since a very particular alignment of the latent embedding is needed but the design of the VAE does not encourage it in any explicit way.
We address this matter and offer the following explanation: the diagonal approximation in the encoder together with the inherent stochasticity force local orthogonality of the decoder. The local behavior of promoting both reconstruction and orthogonality matches closely how the PCA embedding is chosen. Alongside providing an intuitive understanding, we justify the statement with full theoretical analysis as well as with experiments. \end{abstract}

\footnotetext[1]{These authors contributed equally to this work.}
\section{Introduction}
The Variational Autoencoder (VAE)~\cite{KingmaWelling2014:VAE, rezende2014stochastic} is one of the foundational architectures in modern-day deep learning.
It serves both as a generative model as well as a representation learning technique.
The generative model is predominantely exploited in computer vision~\cite{dcign, 2016arXiv160408772G, InvAutoFlow, GregorEtAl2015:DRAWImgGen} with notable exceptions
such as generating combinatorial graphs~\cite{kusner2017grammar}.
As for representation learning, there is a variety of applications, ranging over image interpolation \cite{HouEtAl2017:ConsistentVA-interpolation}, one-shot generalization~\cite{RezendeEtAl2016:oneshotgeneralization}, language models~\cite{yangEtAl2017:VAE4Text}, speech transformation~\cite{blaauwBonada2016:speechtransformVAE}, and more.
Aside from direct applications, VAEs embody the success of variational methods in deep learning and have inspired a wide range of ongoing research~\cite{variational-flows, ZhangEtAl2017:AdvancesInVI}.

Recently, unsupervised learning of interpretable latent representations has received a lot of attention.
Interpretability of the latent code is an intuitively clear concept.
For instance, when representing faces one latent variable would solely correspond to the gender of the person, another to skin tone, yet another to hair color and so forth. Once such a representation is found it allows for interpretable latent code manipulation, which is desirable in a variety of applications; recently, for example, in reinforcement learning
\cite{DBLP:journals/corr/abs-1709-02349, 2017arXiv170708475H, 8206046, 7759578, 2018arXiv180704742N}.

The term \emph{disentanglement} \cite{DesjardinsXBengio2012:Disentfactors, Bengio12representationlearning, locatello2018challenging} offers a more formal approach. 
A representation is considered disentangled if each latent component encodes precisely one ``aspect'' (a generative factor) of the data. Under the current disentanglement metrics \cite{beta-vae, factor-vae, tc-beta-vae, locatello2018challenging}, VAE-based architectures ($\beta$-VAE~\cite{beta-vae}, TCVAE~\cite{tc-beta-vae}, FactorVAE~\cite{factor-vae}) dominate the benchmarks, leaving behind other approaches such as InfoGAN~\cite{infogan2016} and DCIGN~\cite{dcign}. Exemplarily, a latent traversal for a $\beta$-VAE is shown in Fig. \ref{fig:latent_trav} in which precisely one generative factor is isloated (face azimuth).

\begin{figure}
\centering{
\includegraphics[width=0.9\linewidth]{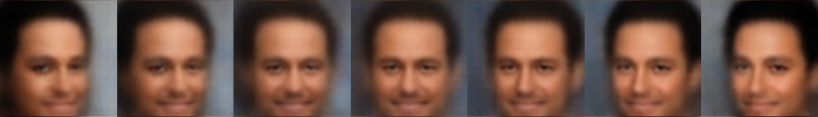}
\caption{Latent traversal over a single latent coordinate on an exemplary image from the CelebA dataset~\cite{liu2015faceattributes} for a trained $\beta$-VAE. The latent coordinate clearly {\bf isolates} the azimuth angle. Provided by courtesy of the authors of~\cite{beta-vae}.
}
}
\label{fig:latent_trav}
\end{figure}
The success of VAE-based architectures on disentanglement tasks comes with a certain surprise.
One surprising aspect is that VAEs have been challenged on both of its own design functionalities,
as generative models~\cite{gan-paper, 2019arXiv190312436G} and as log-likelihood optimizers~\cite{adv-auto, adv-var-bayes}. Yet, no such claims are made in terms of disentanglement.
Another surprise stems from the fact that disentanglement requires the following feature:
the representative low-dimensional manifold must be aligned well with the coordinate axes.
However, the design of the VAE does not suggest any such mechanism. On the contrary, the idealized log-likelihood objective is, for example, invariant to rotational changes in the alignment.

Such observations have planted a suspicion that the inner workings of the VAE are not sufficiently understood.
Several recent works approached this issue \cite{understanding-disent, anonymous2019isa-vae, hidden-talents, fixing-broken-elbo, 2019arXiv190312436G, 2018arXiv181202833M,2019arXiv190305789D}.
However, a mechanistic explanation for the VAE's unexpected ability to disentangle is still missing.

In this paper, we isolate an internal mechanism of the VAE (also $\beta$-VAE) responsible for
choosing a particular latent representation and its alignment.
We give theoretical analysis covering also the nonlinear case and explain the discovered dynamics intuitively.
We show that this mechanism promotes local orthogonality of the embedding transformation and clarify how this orthogonality corresponds to good disentanglement.
Further, we uncover strong resemblance between this mechanism and the classical Principle Components Analysis (PCA) algorithm.
We confirm our theoretical findings in experiments.

Our theoretical approach is particular in the following ways:
(a) we base the analysis on the \emph{implemented} loss function in contrast to the typically considered idealized loss,
and (b) we identify a specific regime, prevalent in practice, and utilize it for a vital simplification. This simplification is the crucial step in enabling formalization.

The results, other than being significant on their own, also provide a solid explanation of ``why $\beta$-VAEs disentangle''.

\section{Background}
\label{sec:background}
Let us begin with reviewing the basics of VAE, PCA, and of the Singular Value Decomposition (SVD), along with a more detailed overview of disentanglement.
\subsection{Variational Autoencoders}
Let $\{\xi\}_{i=1}^N$ be a dataset consisting of $N$ \iid samples $\xi \in X = \mathbb{R}^n$ of a random variable $\mb{x}$.
An autoencoder framework operates with two mappings, the encoder $\Enc \colon X \to Z$ and the decoder $\Dec \colon Z \to X$, where $Z = \mathbb{R}^d$ is called the {\it latent space}.
In case of the VAE, both mappings are probabilistic and a fixed {\it prior distribution} $p(\mb{z})$ over $Z$ is assumed.
Since the distribution of $\mb{x}$ is also fixed (actual data distribution $q(\mb{x})$), the mappings $\Enc$ and $\Dec$ induce joint distributions
$q(\mb x, \mb z)=q_{\varphi}(\mb z|\mb x) q(\mb x)$
and
$p(x, z) = p_{\theta}(\mb x| \mb z) p(\mb z)$, respectively
 (omitting the dependencies on parameters $\theta$ and $\varphi$).
 The idealized VAE objective is then the marginalized log-likelihood
\begin{align}
\sum_{i=1}^N \log p(\xi). \label{eq:loglhood}
\end{align}
This objective is, however, not tractable and is approximated by the evidence lower bound (ELBO) \cite{KingmaWelling2014:VAE}.
For a fixed $\xi$ the log-likelihood $\log p(\xi)$ is lower bounded by
\begin{align}
 \E_{\mb{z} \sim q(\mb{z} \mid \xi)}\log p(\xi \mid \mb{z}) - \KL{q(\mb{z}\mid \xi)}{p(\mb{z})}\,, \label{eq:raw_elbo}
\end{align}
where the first term corresponds to the reconstruction loss and the second to the KL divergence between
 the latent representation $q(\mb{z}\mid \xi)$ and the prior distribution $p(\mb{z})$.
A variant, the $\beta$-VAE~\cite{beta-vae}, introduces a weighting $\beta$ on the KL term for regulating the trade-off between reconstruction (first term) and the proximity to the prior. Our analysis will automatically cover this case as well.

Finally, the prior $p(\mb{z})$ is set to $\mathcal{N}(0, \mathcal{I})$ and the encoder is assumed to have the form
\begin{align}
  \Enc(\mb{x}) \sim q_{\varphi}(\mb z|\mb x) = \mathcal{N}\left(\mb{\mu_\varphi}(\mb{x}), \diag{\sigma^2_\varphi}(\mb{x})\right), \label{eq:diagonal_enc}
\end{align}
where $\mb{\mu}_\varphi$ and $\mb{\sigma}_\varphi$ are deterministic mappings depending on parameters $\varphi$. Note particularly, that {\bf the covariance matrix is enforced to be diagonal}.
This turns out to be highly significant for the main result of this work.
The KL-divergence in \eqref{eq:raw_elbo} can be computed in closed form as
\begin{align}
L_{\text{KL}} = \frac12 \sum_{j=1}^{d} \left( \mu^2_j(\xi) + \sigma^2_j(\xi) - \log\sigma^2_j(\xi) -1 \right).
\label{eq:obj_impl}
\end{align}
In practical implementations, the reconstruction term from \eqref{eq:raw_elbo} is approximated with either a square loss or a cross-entropy loss.
\subsection{Disentanglement}
In the context of learning interpretable representations \cite{Bengio12representationlearning,beta-vae,understanding-disent,anonymous2019isa-vae,Schmidhuber1992:LearningFactorialCodes}
 it is useful to assume that the data originates from a process with some generating factors.
For instance, for images of faces this could be face azimuth, skin brightness, hair length, and so on.
Disentangled representations can then be defined as ones in which individual latent variables are sensitive to changes in individual generating factors, while being relatively insensitive to other changes \cite{Bengio12representationlearning}. Although quantifying disentanglement is nontrivial, several metrics have been proposed \cite{factor-vae, beta-vae, tc-beta-vae}.

Note also, that disentanglement is impossible without first learning a sufficiently expressive latent representation capable of good reconstruction.

In an unsupervised setting, the generating factors are of course unknown and the learning has to
resort to statistical properties. Linear dimensionality reduction techniques demonstrate the two basic statistical approaches. Principle Components Analysis (PCA) greedily isolates sources of variance in the data, while Independent Component Analysis (ICA) recovers a factorized representation, see~\cite{Ridgeway2016:FactorialRepr} for a recent review.

One important point to make is that {\bf disentanglement is sensitive to rotations of the latent embedding}. Following the example above, let us denote by $a$, $s$, and $h$, continuous values corresponding to face azimuth, skin brightness, and hair length. Then, if we change the ideal latent representation as follows
\begin{align}
\begin{pmatrix} a \\ s \\ h \end{pmatrix} \mapsto
   \begin{pmatrix*}[r]0.75 a + 0.25s + 0.61 h \\ 0.25a + 0.75s - 0.61 h \\ -0.61a + 0.61s + 0.50h \end{pmatrix*},
\end{align}
we obtain a representation that is equally expressive in terms of reconstruction (in fact we only multiplied with a 3D rotation matrix) but individual latent variables entirely lost their interpretable meaning.

\subsection{PCA and Latent Representations}

Let us examine more closely how PCA chooses the alignment of the latent embedding and why it matters.

It is well known \cite{autoassociation} that for a linear autoencoder with encoder $Y' \in \mathbb{R}^{d \times n}$, decoder $Y \in \mathbb{R}^{n \times d}$, and square error as reconstruction loss, the objective
\begin{align} \label{eq:pca_obj}
\min_{Y, Y'} \sum_{\xi \in X} \|\xi - YY'\xi \|^2
\end{align}
is minimized by the PCA decomposition. Specifically, by setting $Y' = P_d$, and $Y = P_d^\T$, for $P_d = \Unit_{d\times n}P \in \mathbb{R}^{d \times n}$,
where $P \in \mathbb{R}^{n \times n}$ is an orthogonal matrix formed by the $n$ normalized eigenvectors
(ordered by the magnitudes of the corresponding eigenvalues)
of the sample covariance matrix of $X$ and $\Unit_{d\times n} \in \mathbb{R}^{d \times n}$ is a trivial projection matrix.

However, there are many minimizers of \eqref{eq:pca_obj} that do not induce the same latent representation. In fact, it suffices to append $Y'$ with some invertible transformations (\eg rotations and scaling) and prefix $Y$ with their inverses. This geometrical intuition is well captured using the singular value decomposition (SVD), see also Figure \ref{fig:svd}.

\begin{figure*}
\begin{center}
\includegraphics[width=\textwidth]{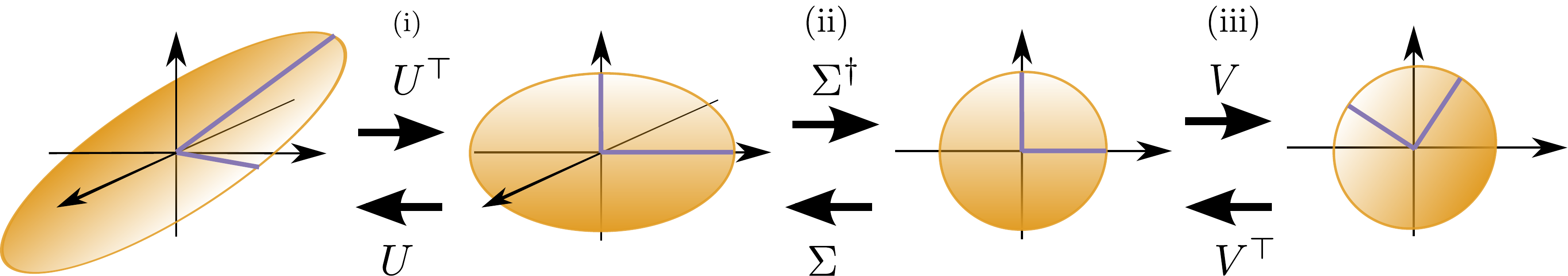}
\caption{Geometric interpretation of the singular value decomposition (SVD).
Sequential illustration of the effects of applying the corresponding SVD matrices of the encoder transformation $V\Sigma^\dagger U^\top$ (left to right) and decoder $U \Sigma V^\top$ (right to left).
We notice that steps (i) and (ii) of the encoder preserve the principle directions of the data. Step (iii), however, causes misalignment. In that regard, good encoders are the ones for which step (iii) is trivial. The same argument works for the decoder (in reverse order). This condition is equivalent (for non-degenerate transformations) to $U\Sigma V^\T$ having orthogonal columns (See Proposition \ref{prop:equi_decoder}, where this is phrased for the decoder).  \label{fig:svd}}
\end{center}
\end{figure*}

\begin{theorem}[SVD rephrased, \cite{Golub:1965:SVD}] Let $M \colon \mathbb{R}^n \to \mathbb{R}^d$ be a linear transformation (matrix). Then there exist
\begin{itemize}\setlength\itemsep{0em}
\item $U \colon \mathbb{R}^n \to \mathbb{R}^n$, an orthogonal transformation (matrix) of the input space,
\item $\Sigma \colon \mathbb{R}^n \to \mathbb{R}^d$ a ``scale-and-embed'' transformation (induced by a diagonal matrix),
\item $V \colon \mathbb{R}^d \to \mathbb{R}^d$, an orthogonal transformation (matrix) of the output space
\end{itemize}
such that $M = V\Sigma U^\T$.
\end{theorem}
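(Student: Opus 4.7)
The plan is to give a standard constructive proof via the spectral theorem applied to $M^\T M$. First I would observe that $M^\T M \in \mathbb{R}^{n \times n}$ is symmetric and positive semi-definite, hence by the spectral theorem there exists an orthogonal matrix $U \in \mathbb{R}^{n \times n}$ whose columns $u_1, \dots, u_n$ are orthonormal eigenvectors with eigenvalues $\lambda_1 \geq \lambda_2 \geq \dots \geq \lambda_n \geq 0$. I would then set $\sigma_i := \sqrt{\lambda_i}$ and let $r := \rank(M)$, so that exactly the first $r$ of the $\sigma_i$ are strictly positive.

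Next I would construct $V$ and $\Sigma$ explicitly. Define $v_i := M u_i / \sigma_i \in \mathbb{R}^d$ for $i = 1, \dots, r$. A short calculation shows $v_i^\T v_j = u_i^\T M^\T M u_j / (\sigma_i \sigma_j) = \lambda_j u_i^\T u_j /(\sigma_i \sigma_j) = \delta_{ij}$, so these vectors form an orthonormal set in $\mathbb{R}^d$. I would then extend $\{v_1, \dots, v_r\}$ to an orthonormal basis $v_1, \dots, v_d$ of $\mathbb{R}^d$ (using, e.g., Gram--Schmidt on any completion), and assemble $V \in \mathbb{R}^{d \times d}$ as the matrix with those columns. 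Finally, I would define $\Sigma \in \mathbb{R}^{d \times n}$ to be the ``diagonal'' scale-and-embed matrix with entries $\Sigma_{ii} = \sigma_i$ for $i \leq \min(n,d)$ and zero elsewhere.

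To conclude, I would verify the identity $M = V \Sigma U^\T$ by checking both sides agree on the basis $u_1, \dots, u_n$. For $i \leq r$ we have $V \Sigma U^\T u_i = V \Sigma e_i = \sigma_i v_i = M u_i$ by construction. For $i > r$ we have $\lambda_i = 0$, hence $\|M u_i\|^2 = u_i^\T M^\T M u_i = 0$, so $M u_i = 0 = V \Sigma e_i$. Since both linear maps agree on a basis, they are equal.

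The only genuine subtlety in this proof is handling the rank-deficient case: the vectors $v_i$ are defined only for $\sigma_i > 0$, so one must complete them to a full orthonormal basis of the output space rather than defining them by the formula $M u_i / \sigma_i$ everywhere. This is a routine step, so I do not expect any serious obstacle; the whole argument is a textbook application of the spectral theorem, and the reference to Golub's paper can stand in for the classical details.
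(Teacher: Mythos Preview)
Your argument is correct and is the standard textbook derivation of the SVD via the spectral decomposition of $M^\T M$. Note, however, that the paper does not actually supply a proof of this theorem: it is stated as a classical result with a citation to Golub (1965) and used as background, so there is no ``paper's own proof'' to compare against. Your constructive proof is exactly the kind of argument that reference stands in for, and the handling of the rank-deficient case (completing $\{v_1,\dots,v_r\}$ to a basis) is the right subtlety to flag.
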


\begin{remark}
For the sake of brevity, we will refer to orthogonal transformations (with slight abuse of terminology) simply as rotations.

\end{remark}

\begin{example}[Other minimizers of the PCA objective] \label{ex:rotation_matters}
Define $Y$ and $Y'$ with their SVDs as $Y = P^\T \Sigma Q$ and its pseudoinverse $Y' = Y^\dagger = Q^\T\Sigma^{\dagger} P$  and see that
\begin{align}
YY' = P^\T \Sigma QQ^\T\Sigma^{\dagger} P = P^\T \Unit_{d\times n}\Unit_{n\times d} P = P_d^\T P_d
\end{align}
so they are indeed also minimizers of the objective \eqref{eq:pca_obj} irrespective of our choice of $Q$ and $\Sigma$.

It is also straightforward to check that the only choices of $Q$, which respect the coordinate axes given by PCA, are for $|Q|$ to be a permutation matrix.
\end{example}

The take-away message (valid also in the non-linear case) from this example is:
\begin{center}
{\bf Different rotations of the same latent space are equally suitable for reconstruction.}
\end{center}
Following the PCA example, we formalize which linear mappings have the desired ``axes-preserving'' property.

\begin{prop}[Axes-preserving linear mappings]\label{prop:equi_decoder} Assume $M \in \mathbb{R}^{n \times d}$ with $d < n$ has $d$ distinct nonzero singular values. Then the following statements are equivalent:
\begin{enumerate}\setlength\itemsep{0em}
\item[(a)] The columns of $M$ are (pairwise) orthogonal.
\item[(b)] In every SVD of $M$ as $M = U\Sigma V^\T$, $|V|$ is a permutation matrix.
\end{enumerate}
\end{prop}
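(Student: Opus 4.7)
The plan is to reformulate both (a) and (b) as statements about the Gram matrix $M^\T M$ and to exploit the rigidity coming from the distinctness of the singular values. First I would observe that the columns of $M$ are pairwise orthogonal iff $M^\T M$ is diagonal, and that for any SVD $M = U\Sigma V^\T$ one has $M^\T M = V\,\Sigma^\T \Sigma\,V^\T = V D V^\T$, where $D := \Sigma^\T \Sigma \in \mathbb{R}^{d\times d}$ is diagonal with $d$ distinct positive entries (the squared singular values). This reduces the proposition to a purely linear-algebraic claim about an arbitrary orthogonal $V \in \mathbb{R}^{d\times d}$: that $V D V^\T$ is diagonal if and only if $|V|$ is a permutation matrix.

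For the direction (b) $\Rightarrow$ (a) I would simply write $V = PS$ with $P$ a permutation matrix and $S$ a diagonal $\pm 1$ matrix, and compute $V D V^\T = P S D S P^\T = P D P^\T$ (using $S^2 = I$ and that diagonal matrices commute); this is $D$ with rows and columns simultaneously permuted, so it is diagonal and (a) follows. The substantive direction is (a) $\Rightarrow$ (b). Fixing an arbitrary SVD and setting $\tilde D := V D V^\T$, which is diagonal by (a), I would use that $\tilde D$ and $D$ are orthogonally similar and hence share the same multiset of eigenvalues; since these are distinct we must have $\tilde D = \diag(d_{\pi(1)},\ldots,d_{\pi(d)})$ for some permutation $\pi$, and each eigenspace of $\tilde D$ is the one-dimensional span of a single standard basis vector. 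From $\tilde D (V e_i) = V D e_i = d_i\,V e_i$ the unit vector $V e_i$ is forced to be $\pm e_{\pi^{-1}(i)}$, and so $|V|$ is a permutation matrix.

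The central ingredient of the whole argument is the distinctness hypothesis: it shrinks each eigenspace of $\tilde D$ to a single coordinate line, removing any freedom to rotate inside it (without distinctness the claim clearly fails, as in the degenerate case $D = I$ any orthogonal $V$ satisfies $V I V^\T = I$). I do not anticipate any serious technical obstacle beyond bookkeeping the permutation $\pi$ that matches singular values across the two diagonal presentations of $M^\T M$; the argument is essentially a spectral uniqueness result for symmetric matrices with simple spectrum, dressed up via the SVD.
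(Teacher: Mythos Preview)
Your proposal is correct and follows essentially the same route as the paper: both reduce (a) and (b) to statements about the Gram matrix $M^\T M = V(\Sigma^\T\Sigma)V^\T$ and then use the distinctness of the singular values to force one-dimensional eigenspaces, so that $V$ must send standard basis vectors to signed standard basis vectors. The only cosmetic difference is that the paper phrases the key step via the commutation relation $(VP)D = D(VP)$ for a sorting permutation $P$, whereas you read off $Ve_i \in \ker(\tilde D - d_i I)$ directly; these are the same eigenvector argument.
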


We strongly suggest developing a geometrical understanding for both cases (a) and (b) via Figure \ref{fig:svd}. For an intuitive understanding of the formal requirement of distinct eigenvalues, we refer to Supp.~\ref{sec:app:degeneracy}.

Take into consideration that once the encoder preserves the principle directions of the data, this already ensures an axis-aligned embedding. The same is true also if the decoder is axes-preserving, provided the reconstruction of the autoencoder is accurate.

\subsection{Related work}

Due to high activity surrounding VAEs, additional care is needed when it comes to evaluating novelty. To the best of our knowledge, two recent works address related questions and require special attention.

The authors of \cite{understanding-disent} also aim to explain good performance of ($\beta$--)VAE in disentanglement tasks. A compelling intuitive picture of the underlying dynamics is drawn and supporting empirical evidence is given. In particular, the authors {\it hypothesize} that ``$\beta$--VAE finds latent components which make different contributions to the log-likelihood term of the cost function [reconstruction loss]'', while suspecting that the diagonal posterior approximation is responsible for this behavior. Our theoretical analysis confirms both conjectures (see Section \ref{sec:walkthrough}).

Concurrent work \cite{anonymous2019isa-vae} develops ISA-VAE; another VAE-based architecture suited for disentanglement. Some parts of the motivation overlap with the content of our work. First, rotationally nonsymmetric priors are introduced for reasons similar to the content of Section \ref{sec:likelihood_invariant}. And second, both orthogonalization and alignment with PCA directions are empirically observed for VAEs applied to toy tasks.

\section{Results}
\label{sec:results}
\subsection{The problem with log-likelihood} \label{sec:likelihood_invariant}

The message from Example~\ref{ex:rotation_matters} and from the discussion about disentanglement is clear: latent space {\it rotation} matters. Let us look how the idealized objectives \eqref{eq:loglhood} and \eqref{eq:raw_elbo} handle this.

For a fixed rotation matrix $U$ we will be comparing a baseline encoder-decoder pair $(\Enc, \Dec)$ with a pair $(\Enc_{,U}, \Dec_{,U})$ defined as
\begin{align}
\Enc_{,U}(\mb{x}) &= U \Enc(\mb{x}), \\
\Dec_{,U}(\mb{z}) &= \Dec(U^\T\mb{z}).
\end{align}

The shortcomings of idealized losses are summarized in the following propositions.\newline

\begin{prop}[Log-likelihood rotation invariance] \label{prop:log_invariant} Let $\varphi$, $\theta$ be any choice of parameters for encoder-decoder pair $(\Enc_{,U}, \Dec_{,U})$.
Then, if the prior $p(\mb{z})$ is rotationally symmetric, the value of the log-likelihood objective \eqref{eq:loglhood} \emph{does not depend} on the choice of $U$.
\end{prop}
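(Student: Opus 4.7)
The plan is to observe that the marginal log-likelihood $\sum_{i}\log p(\xi)$ depends on the decoder and the prior only, via $p(\mb x)=\int p_\theta(\mb x\mid \mb z)\,p(\mb z)\,\dd \mb z$; the encoder $q_\varphi(\mb z\mid \mb x)$ does not appear in \eqref{eq:loglhood}. Consequently the redefinition $\Enc_{,U}(\mb x)=U\Enc(\mb x)$ is irrelevant for this proposition, and the whole statement reduces to showing that replacing $p_\theta(\mb x\mid \mb z)$ by $p_\theta(\mb x\mid U^\T\mb z)$ leaves the marginal density unchanged whenever $p(\mb z)$ is rotationally symmetric.

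I would then write, for an arbitrary fixed $\xi$,
\begin{align*}
p^{,U}(\xi) &= \int p_\theta(\xi\mid U^\T\mb z)\,p(\mb z)\,\dd \mb z,
\end{align*}
and perform the change of variables $\mb z' = U^\T\mb z$, so $\mb z = U\mb z'$. Because $U$ is orthogonal, $|\det U|=1$ and $\dd\mb z = \dd\mb z'$. The assumption that $p(\mb z)$ is rotationally symmetric means $p(U\mb z')=p(\mb z')$ for every orthogonal $U$. Substituting gives
\begin{align*}
p^{,U}(\xi) &= \int p_\theta(\xi\mid \mb z')\,p(U\mb z')\,\dd \mb z' \\
           &= \int p_\theta(\xi\mid \mb z')\,p(\mb z')\,\dd \mb z' = p(\xi),
\end{align*}
so $\log p^{,U}(\xi)=\log p(\xi)$, independent of $U$. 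Summing over $i$ finishes the argument.

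There is no real obstacle here; the only thing worth emphasizing is that the two ingredients doing all the work are (i) $|\det U|=1$ for orthogonal $U$, and (ii) the rotational symmetry of $p(\mb z)$ (which in the VAE setup is guaranteed by the isotropic Gaussian prior $\mathcal{N}(0,\Unit)$). The statement highlights a structural defect of the idealized objective: even though $\theta$ alone determines $\log p(\xi)$, the family of decoders $\{\Dec_{,U}\}_U$ produces identical likelihood values while inducing arbitrarily rotated latent representations, which motivates looking beyond \eqref{eq:loglhood} (and toward the \emph{implemented} ELBO in \eqref{eq:obj_impl}) to explain axis alignment.
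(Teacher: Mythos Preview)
Your proof is correct and essentially identical to the paper's: both write $p_U(\xi)=\int p(\mb z)\,p_\theta(\xi\mid U^\T\mb z)\,\dd\mb z$, apply the change of variables $\mb z\mapsto U\mb z$ using $|\det U|=1$, and invoke rotational symmetry of the prior to conclude $p_U(\xi)=p(\xi)$. Your additional remark that the encoder does not enter \eqref{eq:loglhood} is a helpful clarification but not a departure from the paper's argument.
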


Note that the standard prior $\mathcal{N}(0, \mathcal{I})$ is rotationally symmetric.
This deficiency is not salvaged by the ELBO approximation.

\begin{prop}[ELBO rotation invariance] \label{prop:elbo_invariant} Let $\varphi$, $\theta$ be any choice of parameters for encoder-decoder pair $(\Enc_{,U}, \Dec_{,U})$.
Then, if the prior $p(\mb{z})$ is rotationally symmetric, the value of the ELBO objective \eqref{eq:raw_elbo} \emph{does not depend} on the choice of $U$.
\end{prop}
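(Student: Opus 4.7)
The plan is to reduce the ELBO computed for $(\Enc_{,U},\Dec_{,U})$ to the ELBO computed for $(\Enc,\Dec)$ by a change of variables $\mb{z}'=U\mb{z}$ in the latent space, exploiting (i) orthogonality of $U$ and (ii) rotational symmetry of the prior. I would handle the two terms of \eqref{eq:raw_elbo} separately.

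First, by the definition $\Enc_{,U}(\mb{x})=U\Enc(\mb{x})$, the distribution $q_{,U}(\mb{z}'\mid\xi)$ is the push-forward of $q(\mb{z}\mid\xi)$ under the orthogonal map $U$. Since $|\det U|=1$, this simply gives $q_{,U}(\mb{z}'\mid\xi) = q(U^\T\mb{z}'\mid\xi)$. I would then rewrite the reconstruction term as an expectation over $\mb{z}\sim q(\cdot\mid\xi)$ with $\mb{z}'=U\mb{z}$, and observe that the decoder's output satisfies $\Dec_{,U}(\mb{z}')=\Dec(U^\T\mb{z}')=\Dec(\mb{z})$, so $p_{,U}(\xi\mid \mb{z}')=p(\xi\mid\mb{z})$. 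The two reconstruction terms therefore agree exactly.

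For the KL term, I would perform the same change of variables inside the integral $\int q_{,U}(\mb{z}'\mid\xi)\log\frac{q_{,U}(\mb{z}'\mid\xi)}{p(\mb{z}')}\dd\mb{z}'$. Substituting $\mb{z}'=U\mb{z}$ leaves the Lebesgue measure invariant (again because $|\det U|=1$), so the integrand becomes $q(\mb{z}\mid\xi)\log\frac{q(\mb{z}\mid\xi)}{p(U\mb{z})}$. The assumed rotational symmetry of $p(\mb{z})$ gives $p(U\mb{z})=p(\mb{z})$, so this integral equals $\KL{q(\mb{z}\mid\xi)}{p(\mb{z})}$, the baseline KL.

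Combining both pieces, the ELBO for $(\Enc_{,U},\Dec_{,U})$ coincides with the ELBO for $(\Enc,\Dec)$ for every $\xi$, hence is independent of $U$. The only subtle step is the first one: being careful that the ``push-forward'' interpretation of $\Enc_{,U}$ translates cleanly into the density identity $q_{,U}(\mb{z}'\mid\xi)=q(U^\T\mb{z}'\mid\xi)$, which relies on $U$ being orthogonal (not merely invertible) so that no Jacobian factor appears. Everything else is a routine change of variables, and no assumption beyond rotational symmetry of $p(\mb{z})$ is used — in particular, the diagonal form \eqref{eq:diagonal_enc} plays no role, which is precisely what motivates the subsequent analysis of the implemented loss.
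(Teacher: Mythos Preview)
Your proof is correct. It differs from the paper's in a useful way: the paper does not work directly with the two ELBO terms in \eqref{eq:raw_elbo}, but instead invokes the identity $\log p_U(\xi)=\mathrm{ELBO}+\KL{q_U(\mb z\mid\xi)}{p_U(\mb z\mid\xi)}$, then shows separately that $\log p_U(\xi)$ is invariant (reusing Lemma~\ref{lemma:p_preserved}, i.e.\ Proposition~\ref{prop:log_invariant}) and that the KL to the \emph{true} posterior $p_U(\mb z\mid\xi)$ is invariant (via Bayes' rule, change of variables, and rotational symmetry). Your argument is more self-contained: you never need Proposition~\ref{prop:log_invariant}, and you only manipulate the prior $p(\mb z)$ rather than the implicitly defined posterior $p_U(\mb z\mid\xi)$. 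The paper's route has the minor advantage of making the dependence on the already-proved log-likelihood invariance explicit, but your direct treatment of the reconstruction and prior-KL terms is cleaner and closer to how practitioners think about the ELBO.
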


We do not claim novelty of these propositions, however we are not aware of their formalization in the literature.
The proofs can be found in Supplementary Material (Suppl.~\ref{sec:app:proofs}).
An important point now follows:
\begin{center}
{\bf Log-likelihood based methods (with rotationally symmetric priors) cannot claim to be designed to produce disentangled representations.}
\end{center}
However, \textbf{enforcing a diagonal posterior of the VAE encoder \eqref{eq:diagonal_enc} disrupts the rotational symmetry} and consequently the resulting objective \eqref{eq:obj_impl} escapes the invariance arguments.
Moreover, as we are about to see, this diagonalization comes with beneficial effects regarding disentanglement.
We assume this diagonalization was primarily introduced for different reasons (tractability, computational convenience), hence the ``by accident'' part of the title.

\subsection{Reformulating VAE loss}
\label{sec:reformulating_vae_loss}

The fact that VAEs were \emph{not meant} to promote orthogonality reflects in some technical challenges.
For one, we cannot follow a usual workflow of a theoretical argument; set up an idealized objective and find suitable approximations which allow for stochastic gradient descent (a top-down approach).
We need to do the exact opposite, start with the \emph{implemented loss function} and find the right simplifications that allow isolating the effects in question while preserving the original training dynamics (a bottom-up approach).
This is the main content of this section.

First, we formalize the typical situation in which VAE architectures ``shut down'' (fill with pure noise) a subset of latent variables and put high precision on the others.

\begin{definition}\label{def:polarized} We say that parameters $\varphi$, $\theta$ induce a \emph{polarized regime} if the latent coordinates $\{1, 2, \dots, d\}$ can be partitioned as $V_a \cup V_p$ (sets of active and passive variables) such that
\begin{enumerate}\setlength\itemsep{0em}
\item[(a)] $\mu^2_j(\mb{x}) \ll 1$ and $\sigma^2_j(\mb{x}) \approx 1$ for $j \in V_p$,
\item[(b)] $\sigma^2_j(\mb{x}) \ll 1$ for $j \in V_a$,
\item[(c)] The decoder ignores the passive latent components, \ie $$\frac{\partial \Dec(z)}{\partial z_j} = 0 \quad \forall j \in V_p.$$
\end{enumerate}
\end{definition}

The polarized regime simplifies the loss $L_{\text{KL}}$ from \eqref{eq:obj_impl}; part (a) ensures zero loss for passive variables and part (b) implies that
$\sigma^2_j(\mb{x}) \ll -\log(\sigma^2_j(\mb{x}))$. All in all, the per-sample-loss reduces to
\begin{align} \label{eq:kl_pol}
L_{\approx \text{KL}}(\xi) = \frac12 \sum_{j \in V_a} \left( \mu^2_j(\xi) - \log(\sigma^2_j(\xi)) -1 \right).
\end{align}
\textbf{We will assume the VAE operates in the polarized regime.} 
In Section \ref{sec:polarized_regime}, we show on multiple tasks and datasets that the two objectives align very early in the training. This behavior is well-known to practitioners.

Also, we approximate the reconstruction term in \eqref{eq:raw_elbo}, as it is most common, with a square loss
\begin{align}
\Lrec(\xi) = \E \|\Dec(\Enc(\xi)) - \xi \|^2
\end{align}
where the expectation is over the stochasticity of the encoder.
All in all, the loss we will analyze has the form
\begin{align} \label{eq:analyzed_loss}
\sum_{\xi \in X} \Lrec(\xi) + L_{\approx \text{KL}}(\xi).
\end{align}

Moreover, the reconstruction loss can be further decomposed into two parts; deterministic and stochastic. The former is defined by
\begin{align} \label{eq:det_loss}
\overline{L}_{\text{rec}}(\xi) = \| \Dec(\mu(\xi)) - \xi \|^2
\end{align}
and captures the square loss of the mean encoder. Whereas the stochastic loss
\begin{align}\label{eq:stoch_loss}
\hat{L}_{\text{rec}}(\xi) = \E \| \Dec(\mu(\xi)) - \Dec(\Enc(\xi)) \|^2
\end{align}
is purely induced by the noise injected in the encoder.

\begin{prop}\label{prop:loss_decomp} If the stochastic estimate $\Dec(\Enc(\xi))$ is unbiased around $\Dec(\mu(\xi))$, then
\begin{align}
\Lrec(\xi) = \overline{L}_{\text{rec}}(\xi) + \hat{L}_{\text{rec}}(\xi).
\end{align}
\end{prop}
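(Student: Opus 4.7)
The plan is to apply the standard bias--variance style decomposition. I would introduce the shorthand $A = \Dec(\Enc(\xi))$ (a random variable, with randomness coming from the stochastic encoder) and $B = \Dec(\mu(\xi))$ (deterministic, depending only on the mean of the encoder). In this notation, the target identity reads $\E\|A-\xi\|^2 = \|B-\xi\|^2 + \E\|A-B\|^2$, and the hypothesis of the proposition is exactly $\E[A] = B$.

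The first step is to insert $\pm B$ inside the norm and expand:
\begin{align*}
\E\|A-\xi\|^2 = \E\|(A-B) + (B-\xi)\|^2.
\end{align*}
Expanding the square yields three terms: $\E\|A-B\|^2$, the deterministic term $\|B-\xi\|^2$ (which pulls out of the expectation untouched), and a cross term $2\,\E\langle A-B,\ B-\xi\rangle$. Because $B-\xi$ is not affected by the encoder noise, linearity of expectation lets me factor the cross term as $2\langle \E[A-B],\ B-\xi\rangle$, and the unbiasedness assumption collapses $\E[A-B]$ to zero. What remains is precisely $\overline{L}_{\text{rec}}(\xi) + \hat{L}_{\text{rec}}(\xi)$, matching definitions \eqref{eq:det_loss} and \eqref{eq:stoch_loss}.

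There is essentially no obstacle here; the only subtlety worth flagging is that the unbiasedness hypothesis is needed for a vector-valued random variable, so the cross term must be read as a dot product (taken componentwise before expectation). Since $\Dec$ is a nonlinear map, $\E[\Dec(\Enc(\xi))] = \Dec(\mu(\xi))$ is a genuine assumption rather than a tautology, but it is simply the ``unbiased'' hypothesis stated in the proposition and therefore available for use. The proof is a two-line calculation once this setup is in place.
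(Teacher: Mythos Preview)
Your proposal is correct and matches the paper's own proof essentially line for line: the paper also invokes the identity $\E\|\mb{X}-\mb{b}\|^2 = \E\|\mb{X}-\mb{\mu}\|^2 + \|\mb{\mu}-\mb{b}\|^2$ for $\mb{\mu}=\E\mb{X}$, then instantiates $\mb{X}=\Dec(\Enc(\xi))$, $\mb{\mu}=\Dec(\mu(\xi))$, $\mb{b}=\xi$. Your explicit expansion of the cross term is just the one-line justification of that identity, so there is no meaningful difference in approach.
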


This decomposition resembles the classical bias-variance decomposition of the square error \cite{James:2014:ISL:2517747}.

\subsection{The main result}\label{sec:main}

Now, we finally give theoretical evidence for the central claim of the paper:
\begin{center}
{\bf Optimizing the stochastic part of the reconstruction loss promotes local orthogonality of the decoder.}
\end{center}

On that account, we set up an optimization problem which allows us to optimize the stochastic loss \eqref{eq:stoch_loss} independently of the other two. This will isolate its effects on the training dynamics.

In order to make statements about local orthogonality, we introduce for each $\xi$ the Jacobian (linear approximation) $J_i$ of the decoder at point $\mu(\xi)$, \ie $$J_i = \frac{\partial \Dec(\mu(\xi))}{\partial \mu(\xi)}.$$
Since, according to \eqref{eq:diagonal_enc}, the encoder can be written as
$\Enc(\xi) = \mu(\xi) + \varepsilon(\xi)$ with
\begin{align}\label{eq:sampling}
\varepsilon(\xi) \sim\mathcal{N}\left(0,\diag{\sigma^2}(\xi)\right),
\end{align} we can approximate the stochastic loss \eqref{eq:stoch_loss} with
\begin{align}
  &\E_{\varepsilon(\xi)} \left\| \Dec(\mu(\xi)) - \left(\Dec(\mu(\xi)) + J_i \varepsilon(\xi)\right)\right\|^2 \nonumber\\
  &\qquad=\E_{\varepsilon(\xi)} \| J_i \varepsilon(\xi)\|^2, \label{eq:stochastic_loss_lin}
\end{align}

Although we aim to fix the deterministic loss \eqref{eq:det_loss}, we do not need to freeze the mean encoder and the decoder entirely. Following Example \ref{ex:rotation_matters}, for each $J_i$ and its SVD $J_i = U_i\Sigma_iV_i^\T$, we are free to modify $V_i$ as long we correspondingly (locally) modify the mean encoder.

Then we state the optimization problem as follows:
\begin{align} \label{eq:opt_problem}
\min_{V_i, \sigma^i_j > 0} \quad&\sum_{\xi \in X} \log  \E_{\varepsilon(\xi)}   \|J_i\varepsilon(\xi)\|^2  \\
\text{s.\,t.}\quad\quad
&\sum_{\xi \in X} L_{\approx \text{KL}}(\xi) = C, \label{eq:constraint}
\end{align}
where $\varepsilon(\xi)$ are sampled as in \eqref{eq:sampling}.

A few remarks are now in place.
\begin{itemize}\setlength\itemsep{0em}
\item This optimization is not over network parameters, rather directly over the values of all $V_i, \sigma^i_j$ (only constrained by \eqref{eq:constraint}).
\item Both the objective and the constraint concern \emph{global losses}, not per sample losses.
\item Indeed, none of $V_i, \sigma^i_j$ interfere with the rest of the VAE objective \eqref{eq:analyzed_loss}.
\end{itemize}
The presence of the (monotone) log function has one main advantage; we can describe {\bf all global minima} of \eqref{eq:opt_problem} in closed form. This is captured in the following theorem, the technical heart of this work.

\begin{theorem}[Main result]\label{thm:main} The following holds for optimization problem (\ref{eq:opt_problem}, \ref{eq:constraint}):
\begin{enumerate} \setlength\itemsep{0em}
\item[(a)] Every local minimum is a global minimum.
\item[(b)] In every global minimum, the columns of every $J_i$ are orthogonal.
\end{enumerate}
\end{theorem}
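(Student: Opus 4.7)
The plan is to eliminate the $\sigma_{ij}$ variables first, reducing the problem to an unconstrained optimization over $V_i\in O(d)$. Using $\E_{\varepsilon(\xi)}\|J_i\varepsilon(\xi)\|^2 = \sum_j \sigma_{ij}^2 (J_i^\T J_i)_{jj}$ and treating the $\mu_j^2$-contribution to $L_{\approx\text{KL}}$ as invariant under the allowed local re-rotations of the mean encoder (which are precisely chosen so that $\overline{L}_{\text{rec}}$ is preserved), the constraint \eqref{eq:constraint} collapses, up to an additive constant, to $-\sum_{i,j}\log\sigma_{ij}^2 = C'$. Forming the Lagrangian for the inner minimization in $\sigma_{ij}$ and setting its derivatives to zero yields the stationarity condition $\sigma_{ij}^2 (J_i^\T J_i)_{jj} = c_i$ independent of $j$; substituting this back and eliminating $c_i$ via the constraint, the objective becomes, up to an affine transformation, the reduced optimization $\min_{V_i\in O(d)} \sum_{i,j} \log (J_i^\T J_i)_{jj}$, which decouples into one sub-problem per sample $i$.

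For each $i$, set $M_i := J_i^\T J_i = V_i \hat\Sigma_i^2 V_i^\T$ where $\hat\Sigma_i^2 = \diag(s_{i,1}^2,\ldots,s_{i,d}^2)$ collects the fixed singular values of $J_i$. Hadamard's inequality gives $\prod_j (M_i)_{jj} \ge \det M_i = \prod_k s_{i,k}^2$, with equality iff $M_i$ is diagonal. Under the distinct-singular-values hypothesis (carried over from Proposition~\ref{prop:equi_decoder}), diagonality of $M_i$ forces $V_i$ to be a signed permutation, and Proposition~\ref{prop:equi_decoder} then identifies this with the columns of $J_i$ being pairwise orthogonal. This pins down the global minimum value $\sum_k \log s_{i,k}^2$ and all global minimizers, settling part~(b).

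For part~(a), I would argue by contradiction. Suppose $V_i^*$ is a local minimum of the reduced sub-problem at which $M_i^*:=V_i^*\hat\Sigma_i^2 V_i^{*\T}$ has some off-diagonal entry $(M_i^*)_{jk}\neq 0$. Perturb by $V_i(\theta)=R_{jk}(\theta)\,V_i^*$, where $R_{jk}$ is the Givens rotation in the $(j,k)$-plane; then only the $(j,j)$- and $(k,k)$-diagonals of $M_i$ change, and all other terms of $\sum_\ell \log (M_i)_{\ell\ell}$ are unaffected. A Taylor expansion of the product $(M_i(\theta))_{jj}(M_i(\theta))_{kk}$ around $\theta = 0$ yields first-order coefficient $-2(M_i^*)_{jk}\bigl((M_i^*)_{jj}-(M_i^*)_{kk}\bigr)$ and, when that vanishes, second-order coefficient $\bigl((M_i^*)_{jj}-(M_i^*)_{kk}\bigr)^2-4(M_i^*)_{jk}^2$. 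Criticality forces the first-order term to vanish for every pair, so either $(M_i^*)_{jk}=0$ or $(M_i^*)_{jj}=(M_i^*)_{kk}$; in the latter case the second-order coefficient equals $-4(M_i^*)_{jk}^2<0$, yielding a strict descent direction and contradicting locality. Hence every off-diagonal entry of $M_i^*$ vanishes, $M_i^*$ is diagonal, and $V_i^*$ achieves the global minimum identified in part~(b).

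The main obstacle is the reduction step: one must carefully verify that the $\mu_j^2(\xi)$ portion of the KL constraint is genuinely invariant under the local modifications of the mean encoder that accompany each change of $V_i$, so that the inner optimization over $\sigma_{ij}$ decouples cleanly from the outer optimization over $V_i$. A secondary technical point is the distinct-singular-values assumption: without it, ``$M_i$ diagonal implies $V_i$ is a signed permutation'' relaxes to ``$V_i$ is a signed permutation up to arbitrary orthogonal action on degenerate eigenspaces,'' but the paper has already adopted this non-degeneracy condition in Proposition~\ref{prop:equi_decoder}, which covers us here.
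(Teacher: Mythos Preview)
Your argument is correct and shares its core with the paper's: both handle the $\sigma$-variables via AM--GM (yours implicitly, through the Lagrangian stationarity condition $\sigma_{ij}^2(J_i^\T J_i)_{jj}=c_i$), reduce the $V$-optimization to Hadamard's inequality, and dispatch part~(a) via Givens rotations (the paper packages these as a ``locally improving Hadamard'' lemma, alongside a separate ``locally improving AM--GM'' lemma for the $\sigma$'s). The organizational difference is that you eliminate the $\sigma_{ij}$ first and work in a reduced problem over $V_i$ alone, whereas the paper chains the two inequalities directly in the full $(V_i,\sigma_{ij})$-space. Your route is slightly cleaner but incurs one verification you omitted: a local minimum of the \emph{full} constrained problem must project to a local minimum of the reduced problem in $V_i$. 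This follows once you note that, for fixed $V_i$, the inner problem is strictly convex in the variables $\log\sigma_{ij}^2$ (log-sum-exp objective, linear constraint), so the Lagrangian stationary point is the unique inner minimizer and varies continuously with $V_i$.

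Two smaller remarks. First, the detour through Proposition~\ref{prop:equi_decoder} and the distinct-singular-values hypothesis is unnecessary for part~(b): $M_i=J_i^\T J_i$ being diagonal \emph{is} the statement that the columns of $J_i$ are pairwise orthogonal, with no non-degeneracy required; the paper's proof of Theorem~\ref{thm:main} does not invoke that hypothesis either. Second, the $\|\mu\|^2$-invariance you flag as the main obstacle is immediate: the local re-rotation of the mean encoder accompanying a change of $V_i$ is $\mu(\xi)\mapsto V_i\,\mu(\xi)$, and orthogonal maps preserve norms.
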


The full proof as well as an explicit description of the minima is given in Suppl.~\ref{sec:app:proof:main}.
However, an outline of the main steps is given in the next section on the example of a linear decoder.

The presence of the log term in \eqref{eq:opt_problem} admittedly makes our argument indirect.
There are, however, a couple of points to make. First, as was mentioned earlier, encouraging orthogonality was \emph{not a design feature} of the VAE.
In this sense, it is unsurprising that our results are also mildly indirect.

Also, and more importantly, the global optimality of Theorem \ref{thm:main} also implies that, locally, orthogonality is encouraged even for the pure (without logarithm) stochastic loss.

\begin{cor} For fixed $\xi \in X$ consider a subproblem of \eqref{eq:opt_problem} defined as
\begin{align} \label{eq:opt_problem_small}
\min_{V_i, \sigma^i_j > 0} \quad  &\E_{\varepsilon(\xi)}   \|J_i\varepsilon(\xi)\|^2  \\
\mathrm{s.\,t.}
\qquad\quad &L_{\approx \text{KL}}(\xi) = C_i. \label{eq:constraint_sub}
\end{align}
Also then, the result on the structure of local (global) minima holds:
\begin{enumerate}\setlength\itemsep{0em}
\item[(a)] Every local minimum is a global minimum.
\item[(b)] In every global minimum, the columns of every $J_i$ are orthogonal.
\end{enumerate}

\end{cor}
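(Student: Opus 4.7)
The plan is to deduce the corollary directly from Theorem~\ref{thm:main} via two short observations, without rerunning any of the analysis in that proof.

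First, since $\log\colon \mathbb{R}_{>0}\to\mathbb{R}$ is strictly increasing, and since $\E_{\varepsilon(\xi)}\|J_i\varepsilon(\xi)\|^2 > 0$ whenever the $\sigma^i_j$ are positive and $J_i$ has at least one nonzero column, composition with $\log$ preserves the order on feasible values. Consequently, the subproblem (\ref{eq:opt_problem_small},\ref{eq:constraint_sub}) has exactly the same sets of local and global minimizers as the problem obtained by replacing its objective with $\log \E_{\varepsilon(\xi)}\|J_i\varepsilon(\xi)\|^2$. This is the only point where the ``without logarithm'' form of the corollary enters in a nontrivial way.

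Second, the log-version of the single-sample problem is precisely the instance of (\ref{eq:opt_problem},\ref{eq:constraint}) in which the dataset is taken to be the singleton $X=\{\xi\}$ and the constant $C$ equals $C_i$. The optimization variables $V_i$ and $\{\sigma^i_j\}_j$ coincide exactly with those of the global problem restricted to this single point, and the summed constraint (\ref{eq:constraint}) degenerates into (\ref{eq:constraint_sub}) with no remaining coupling across samples. Hence Theorem~\ref{thm:main} applies verbatim and delivers both (a) every local optimum is global, and (b) the columns of $J_i$ are pairwise orthogonal at every global optimum.

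The main (and essentially only) obstacle is verifying that the monotone reduction to a log-objective is well-posed, \ie that the argument of $\log$ is strictly positive on the feasible set. This is immediate from $\sigma^i_j > 0$ together with the mild non-degeneracy assumption that $J_i$ has a nonzero column, which is consistent with the active-coordinate setup of the polarized regime (a zero column would contradict the decoder actually using the corresponding latent direction). Beyond this check, the corollary follows as a clean byproduct of Theorem~\ref{thm:main} and no new analytic machinery is required.
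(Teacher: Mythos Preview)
Your proposal is correct and matches the paper's intended derivation: the paper does not give an independent proof of the corollary but states that it follows from Theorem~\ref{thm:main}, and your two observations---monotonicity of $\log$ to pass between the two objectives, and specialization of (\ref{eq:opt_problem},\ref{eq:constraint}) to the singleton dataset $X=\{\xi\}$---are exactly the missing details. The positivity check you flag is handled in the paper's proof of Theorem~\ref{thm:main} by restricting to active coordinates where $J_i$ has nonzero singular values, so your non-degeneracy caveat is consistent with that setup.
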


All in all, Theorem \ref{thm:main} justifies the central message of the paper stated at the beginning of this section. The analogy with PCA is now also clearer. Locally, VAEs optimize a tradeoff between reconstruction and orthogonality.

This result is unaffected by the potential $\beta$ term in Equation~\eqref{eq:raw_elbo}, although an
 appropriate $\beta$ might be required to ensure the polarized regime.

\section{Proof outline}
\label{sec:walkthrough}

In this section, we sketch the key steps in the proof of Theorem \ref{thm:main} and, more notably, the intuition behind them. The full proof can be found in Suppl.~\ref{sec:app:proof:main}.

We will restrict ourselves to a simplified setting.
Consider a linear decoder $M$ with SVD $M = U\Sigma V^T$, which removes the necessity of local linearization.
This reduces the objective \eqref{eq:opt_problem} from a ``global'' problem over all examples $\xi$
to an objective where we have the same subproblem for each $\xi$.

As in optimization problem (\ref{eq:opt_problem}, \ref{eq:constraint}), we resort to fixing the mean encoder (imagine a well performing one).

In the next paragraphs, we separately perform the optimization over the parameters ${\bm \sigma}$ and the optimization over the matrix $V$.

\subsection{Weighting precision}

For this part, we fix the decoder matrix $M$ and optimize over values $\bm{\sigma}^2 = (\sigma^2_1, \dots, \sigma^2_d)$.
The simplified objective is
\begin{align}
\min_{\bm{\sigma}} \quad  &\E_{\varepsilon \sim \mathcal{N}(0,\diag(\mb{\sigma}^2))}   \|M\varepsilon\|^2 \label{eq:simpl_obj} \\
\text{s.\,t.} 
\qquad\quad& \sum_{j} -\log \sigma^2_j = C, \label{eq:simpl_cons}
\end{align}
where the $\| \mu \|^2$ terms from \eqref{eq:kl_pol} disappear since the mean encoder is fixed.

The values $-\log(\sigma_j)$ can now be thought of as precisions allowed for different latent coordinates.
The $\log$ functions even suggests thinking of the number of significant digits.
Problem \eqref{eq:simpl_obj} then asks to distribute the ``total precision budget`` so that the deviation from decoding ``uncorrupted'' values is minimal.

We will now solve this problem on an example linear decoder $M_1 \colon \mathbb{R}^2 \to \mathbb{R}^3$ given by
\begin{align}
M_1 \colon \begin{pmatrix} x \\ y \end{pmatrix} \mapsto \begin{pmatrix*}[r] 4x+y \\ -3x+y \\ 5x-y \end{pmatrix*}.
\end{align}
Already here we see, that the latent variable $x$ seems more influential for the reconstruction. We would expect that $x$ receives higher precision than $y$.

Now, for $\bm{\varepsilon} = (\varepsilon_x, \varepsilon_y)$, we compute
\begin{align*}
\|M_1\bm{\varepsilon}\|^2 = \|4\varepsilon_x + \varepsilon_y\|^2 + \|{-3}\varepsilon_x + \varepsilon_y\|^2 + \|5\varepsilon_x - \varepsilon_y\|^2
\end{align*}
and after taking the expectation, we can use the fact that $\bm{\varepsilon}$ has zero mean and write
\begin{align*}
\E \|&M_1\bm{\varepsilon}\|^2 = \\
&\var (4\varepsilon_x + \varepsilon_y) + \var ({-3}\varepsilon_x + \varepsilon_y) + \var(5\varepsilon_x - \varepsilon_y).
\end{align*}
Finally, we use that for uncorrelated random variables $A$ and $B$ we have $\var(A + cB) = \var A + c^2 \var B$. After rearranging we obtain
\begin{align*}
\E \|M_1\bm{\varepsilon}\|^2 &= \sigma^2_x(4^2\! + ({-3})^2 + 5^2)\! +\! \sigma^2_y(1^2 + 1^2\! + (-1)^2) \\
&= 50\sigma^2_x + 3\sigma^2_y,
\end{align*}
where $\bm{\sigma} = (\sigma^2_x, \sigma^2_y)$.
Note that the coefficients are the {\bf squared norms of the column vectors} of $M_1$.

This turns the optimization problem \eqref{eq:simpl_obj} into a simple exercise, particularly after realizing that \eqref{eq:simpl_cons} fixes the value of the product $\sigma_x\sigma_y$.
Indeed, we can even set $a^2 = 50\sigma_x$ and $b^2 = 3\sigma_y$ in the trivial inequality $a^2 + b^2 \geq 2ab$ and find that
\begin{align}
\E \|M_1\varepsilon&\|^2 = 50\sigma^2_x + 3\sigma^2_y \geq 2 \cdot \sqrt{50 \cdot 3}\cdot e^{-C} \approx 24.5e^{-C}, \label{eq:simple_ag}
\end{align}
with equality achieved when $\sigma^2_x / \sigma^2_y = 3/50$.
This also implies that the precision ${-}\log \sigma^2_x$ on variable $x$ will be considerably higher than for $y$, just as expected.

Two remarks regarding the general case follow.
\begin{itemize}\setlength\itemsep{0em}
\item The full version of inequality \eqref{eq:simple_ag} relies on the concavity of the $\log$ function; in particular, on (a version of) Jensen's inequality.
\item The minimum value of the objective depends on the product of the column norms. This also carries over to the unsimplified setting.
\end{itemize}

\subsection{Isolating sources of variance}
\label{sec:isolatingsourcesofvariance}
Now that we can find optimal values of precision, the focus changes on optimally rotating the latent space.
In order to understand how such rotations influence the minimum of objective \eqref{eq:simpl_obj}, let us consider the following example in which we again resort to decoder matrix $M_2 \colon \mathbb{R}^2 \to \mathbb{R}^3$.

Imagine, the encoder alters the latent representation by a $45^\circ$ rotation.
Then we can adjust the decoder $M_1$ by first undoing this rotation.
In particular, we set $M_2 = M_1R_{45^\circ}^\T$, where $R_\theta$ is a 2D rotation matrix, rotating by angle $\theta$.
We have
$$M_2 \colon \begin{pmatrix} x' \\ y' \end{pmatrix} \mapsto \begin{pmatrix*}[r] \frac12 \sqrt{2}(3x'+5y') \\ \sqrt{2}({-}2x'-y') \\ \sqrt{2}(3x'+2y') \end{pmatrix*}$$
and performing analogous optimization as before gives
\begin{align}
\E \|M_2\bm{\varepsilon}&\|^2\!=\!\frac{61}{2}\sigma^2_x\!+\!\frac{45}{2} \sigma^2_y \geq 2 \sqrt{\frac{61 \cdot 45}{4}} e^{-C}\!\approx\!52.4e^{-C}. \label{eq:simple_ag_worse}
\end{align}

We see that the minimal value of the objective is more than twice as high, a substantial difference.
On a high level, the reason $M_1$ was a better choice of a decoder is that the variables $x$ and $y$ had very different impact on the reconstruction. This allowed to save some precision on variable $y$, as it had smaller effect, and use it on $x$, where it is more beneficial.

For a higher number of latent variables, one way to achieve a ``maximum stretch'' among the impacts of latent variables, is to pick them greedily, always picking the next one so that its impact is maximized. This is, at heart, the greedy algorithm for PCA.

Let us consider a slightly more technical statement.
We saw in \eqref{eq:simple_ag} and \eqref{eq:simple_ag_worse} that after finding optimal values of $\bm{\sigma}$ the remaining objective is the product of the column norms of matrix $M$. Let us denote such quantity by $\col(M) = \prod_j \|M_{\cdot j}\|$. Then for a fixed matrix $M$, we optimize
\begin{align}
\min_V \col(MV^\T)
\end{align}
over orthogonal matrices $V$.

\begin{figure}
  \centering
  \includegraphics[width=.8\linewidth]{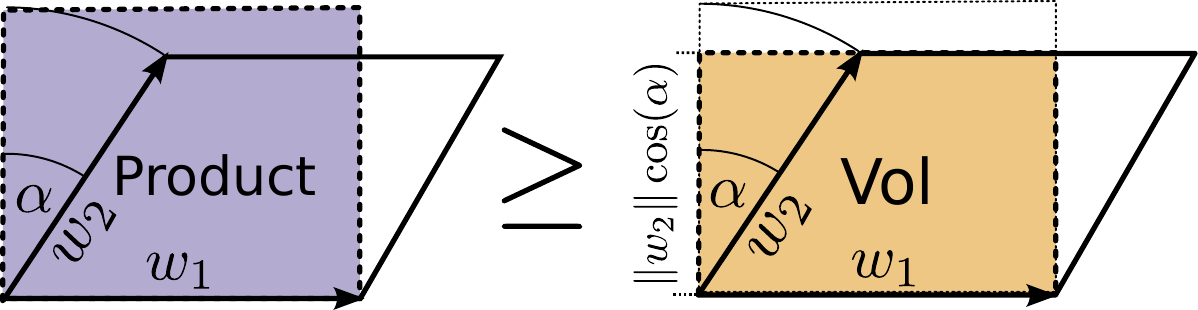}
  \caption{2D illustration of orthogonality in $MV^{\T}$. The vectors $w_1,w_2$ are the columns of $MV^\T$.
    Minimizing the product $\|w_1\|\|w_2\|$ while maintaining the volume $\|w_1\|\|w_2\|\cos(\alpha)$ results in $w_1 \perp w_2$.
  }
  \label{fig:ortho-vol}
\end{figure}

This problem can be interpreted geometrically.
The column vectors of $MV^\T$ are the images of base vectors $e_j$.
Consequently, the product gives an upper bound on the volume (the image of the unit cube)
\begin{align}
\prod_j\|MV^\T e_j\| \ge \mathrm{Vol}(\{ MV^\T x \colon x \in [0, 1]^d\})\,.\label{eqn:vol-det}
\end{align}
However, as orthogonal matrices $V$ are isometries, they do not change this volume.
Also, the bound \eqref{eqn:vol-det} is tight precisely when the vectors $MV^\T e_j$ are orthogonal.
Hence, the only way to optimize $\col(MV^\T)$ is by tightening the bound, that is by finding $V$ for which the column vectors of $MV^\T$ are orthogonal, see Figure \ref{fig:ortho-vol} for an illustration.
In this regards, it is important that $M$ performs a different scaling along each of the axis (using $\Sigma$),
 which allows for changing the angles among the vectors $MV^\T e_j$ (cf.\! Figure \ref{fig:svd}).

\begin{table*}\centering
  \caption{Results for the distance to orthogonality DtO of the decoder (Equation \ref{eq:dto}) and disentanglement score for different architectures and datasets. Lower DtO values are better and higher Disent.\ values are better. Random decoders provide a simple baseline for the numbers.}
  \begin{tabular}{|l|l@{\,}c @{\,}c|cccc|c|}
\hline
& & & & \textbf{$\beta$-VAE} & \textbf{VAE} & \textbf{AE} & \textbf{$\beta$-VAE$_\Sigma$} & \textbf{Random Decoder} \\\hline\hline
\textbf{dSprites} & \textbf{Disent.} & & $\uparrow$ & $\bf 0.33 \pm 0.15$ & $0.21 \pm 0.10$ & $0.09 \pm 0.04$ & $0.12 \pm 0.06$ & \\
 & \textbf{DtO} & $\downarrow$ & & $\bf 0.76 \pm 0.08$ & $1.08 \pm 0.15$ & $1.62 \pm 0.03$ & $1.73 \pm 0.14$ & $1.86 \pm 0.11$\\
\textbf{Synth.\ Lin. } & \textbf{Disent.} & & $\uparrow$  & $\bf 0.99 \pm 0.01$ & -- & $0.71 \pm 0.19$ & $0.71 \pm 0.31$ & \\
 & \textbf{DtO} & $\downarrow$  & & $\bf 0.00 \pm 0.00$ & -- & $0.33 \pm 0.18$ & $0.34 \pm 0.35$ & $0.79 \pm 0.21$\\
\textbf{Synth.\ Non-Lin.} & \textbf{Disent.} & & $\uparrow$  & $\bf 0.73 \pm 0.16$ & -- & $0.59 \pm 0.30$ & $0.42 \pm 0.24$ & \\
 & \textbf{DtO} & $\downarrow$ & & $\bf 0.18 \pm 0.02$ & -- & $0.54 \pm 0.13$ & $0.55 \pm 0.02$ & $0.89 \pm 0.16$\\
\textbf{MNIST} & \textbf{DtO} & $\downarrow$ & & -- & $\bf 1.59 \pm 0.08$ & $1.83 \pm 0.05$ & $1.93 \pm 0.08$ & $2.11 \pm 0.11$\\
\textbf{fMNIST} & \textbf{DtO} & $\downarrow$ & & -- & $\bf 1.36 \pm 0.05$ & $1.87 \pm 0.03$ & $2.02 \pm 0.08$ & $2.11 \pm 0.11$\\
 \hline
\end{tabular}
\label{tbl:l2}
\end{table*}
\section{Experiments}
\label{sec:experiments}
We performed several experiments with different architectures and datasets to validate
our results empirically. We show the prevalence of the polarized regime, the strong orthogonal effects of the ($\beta$-)VAE, as well as the links to disentanglement.

\subsection{Setup}
\newcommand{\ourparagraph}[1]{\vspace*{0.2\baselineskip}\noindent\textbf{#1}}
\ourparagraph{Architectures.} We evaluate the classical VAE, $\beta$-VAE, a plain autoencoder, and $\beta$-$\text{VAE}_{\Sigma}$, where the latter removes the critical diagonal approximation \eqref{eq:diagonal_enc} and produces a full covariance matrix $\Sigma(\xi)$ for every sample. The resulting KL term of the loss is changed accordingly (see Suppl.~\ref{sec:app:fullVAE} for details).

\ourparagraph{Datasets.} We evaluate on the well-known datasets dSprites \cite{dsprites17}, MNIST \cite{Lecun1998:MNIST} and FashionMNIST \cite{fashion_MNIST}, as well as on two synthetic ones.
For both synthetic tasks the input data $X$ is generated by embedding a unit square $V = [0, 1]^2$ into a higher dimension. The latent representation is then expected to be disentangled with respect to axes of $V$. In one case (\emph{Synth.\ Lin.}) we used a linear transformation $f_\mathrm{lin} \colon \mathbb{R}^2 \rightarrow \mathbb{R}^3$ and in the other one a non-linear (\emph{Synth.\ Non-Lin.}) embedding $f_\mathrm{non-lin} \colon \mathbb{R}^2 \rightarrow \mathbb{R}^{6}$. The exact choice of transformations can be found in Suppl.~\ref{sec:app:exp}.
Further information regarding network structures and training parameters is also provided in Suppl.~\ref{sec:app:training}.

\ourparagraph{Disentanglement metric.} For quantifying the disentanglement of a representation, the so called Mutual Information Gap (MIG) was introduced in \cite{tc-beta-vae}. As MIG is not well defined for continuous variables, we use an adjusted definition comprising both continuous and discrete variables, simply referred to as \emph{Disentanglement score}. Details are described in Suppl.~\ref{sec:app:mig}. Just as in the case of MIG, the Disentanglement score is a number between $0$ and $1$, where higher value means stronger disentanglement.

\ourparagraph{Orthogonality metric.} For measuring the practical effects of Theorem \ref{thm:main}, we introduce a measure of non-orthogonality. As argued in Proposition \ref{prop:equi_decoder} and Figure \ref{fig:svd}, for a good decoder $M$ and its SVD $M = U\Sigma V^\T$, the matrix $V$ should be trivial (a signed permutation matrix). We measure the non-triviality with the \emph{Distance to Orthogonality} (DtO) defined as follows. For each $\xi$, $i=1,\dots,N$, employing again the Jacobian $J_i$ of the decoder at $\xi$ and its SVD $J_i = U_i\Sigma_iV_i^\T$ and define
\begin{align}
\text{DtO} = \frac{1}{N} \sum_{i=1}^N \| V_i - P(V_i) \|_F,\label{eq:dto}
\end{align}
where $\|\cdot \|_F$ is the Frobenius norm and $P(V_i)$ is a signed permutation matrix that is closest to $V$ (in $L^1$ sense).
Finding the nearest permutation matrix is solved to optimality via mixed-integer linear programming (see Suppl.~\ref{sec:app:procrutes}).
\begin{table}\centering
  \caption{Percentage of training time where $\Delta_{KL} < 3\,\%$ (Eq.~\eqref{eq:relative_kl_error}) continuously until the end.
    Reported for $\beta$-VAE with low (dataset dependent) and high ($10$) latent dimension. \label{tbl:polarized_regime}}
\begin{tabular}{|l|cc|}
\hline
 & \textbf{$\beta$-VAE} (dep.) & \textbf{$\beta$-VAE} (10) \\\hline\hline
\textbf{ dSprites } & $97.8\,\%$ & $90.6\,\%$\\
\textbf{ fMNIST } & $99.8\,\%$ & $97.7\,\%$\\
\textbf{ MNIST } & $99.8\,\%$ & $99.5\,\%$\\
\textbf{ Synth. Lin. } & $99.8\,\%$ & $96.7\,\%$\\
\textbf{ Synth. Non-Lin. } & $99.9\,\%$ & $98.5\,\%$\\
\hline
\end{tabular}
\end{table}
\subsection{Polarized regime}
\label{sec:polarized_regime}
In Section \ref{sec:reformulating_vae_loss}, we assumed VAEs operate in a polarized regime and approximated $L_{\text{KL}}$, the KL term of the implemented objective \eqref{eq:obj_impl}, with $L_{\approx \text{KL}}$ \eqref{eq:kl_pol}.
In Table \ref{tbl:polarized_regime}
we show that the polarized regime is indeed dominating the training in all examples after a short initial phase.
We report the fraction of the training time in which the relative error
\begin{align}
 \Delta_{KL} = \frac{|L_{\text{KL}} - L_{\approx \text{KL}}|}{L_{\text{KL}}}
\label{eq:relative_kl_error}
\end{align}
stays below $3\,\%$ continuously until the end (evaluated every 500 batches). Active variables can be selected by $\sqrt{\mathrm{var}\left( \mu_j\left( \xi \right) \right)} > 0.5$.

\subsection{Orthogonality and Disentanglement}
Now, we provide evidence for Theorem~\ref{thm:main}
by investigating the DtO~\eqref{eq:dto} for a variety of architectures and datasets,
 see Table~\ref{tbl:l2}.
The results clearly support the claim that the VAE based architectures indeed strive for local orthogonality.
By generalizing the $\beta$-VAE architecture, such that the approximate posterior is any multivariate Gaussian ($\beta$-VAE$_{\Sigma}$), the objective becomes rotationally symmetric (just as the idealized objective).
As such, no specific alignment is prioritized.
The simple autoencoders also do not favor particular orientations of the latent space.

Another important observation is the clear correlation between DtO and the disentanglement score.
We show this in Figure~\ref{fig:mig_l2} where different restarts of the same $\beta$-VAE architecture on the dSprites dataset are displayed. We used the state-of-the-art value $\beta=4$ \cite{beta-vae}.
Additional experiments are reported in Suppl.~\ref{sec:app:addexp}.
 \begin{figure}[h!]\centering
  \includegraphics[width=0.9\columnwidth]{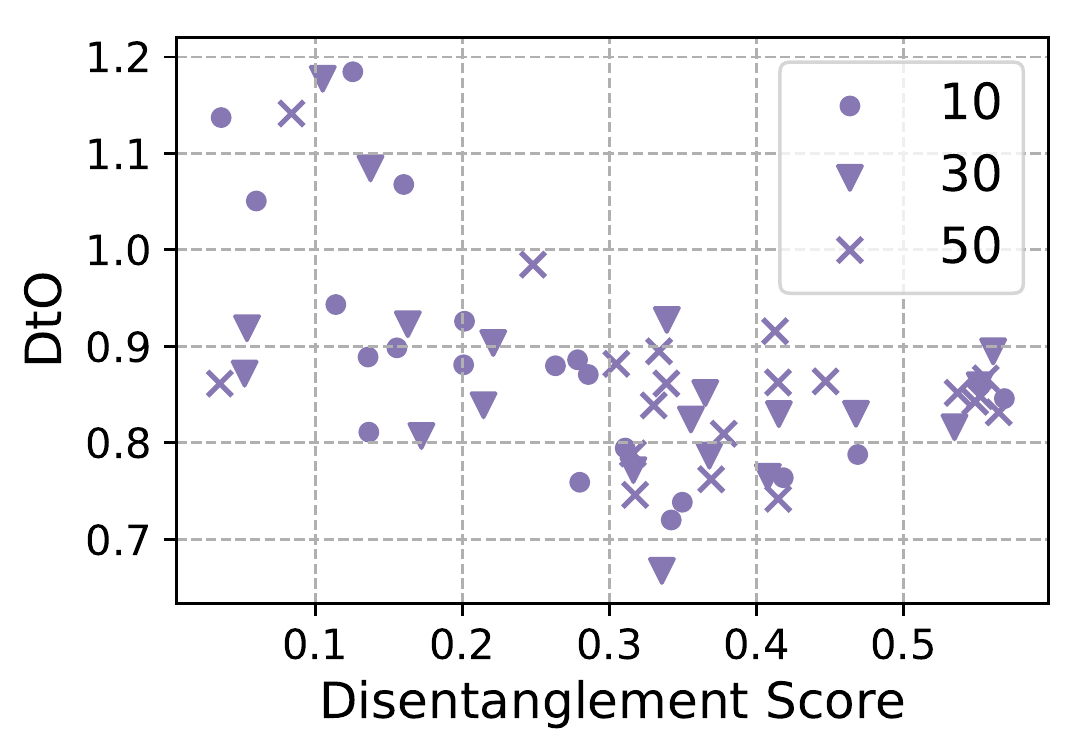}
\caption{Alignment of the latent representation (low DtO, \eqref{eq:dto}) results in better disentanglement (higher score). Each datapoint corresponds to an independent run with $10$, $30$, or $50$ epochs.}
\label{fig:mig_l2}
\end{figure}

\section{Discussion}

We isolated the mechanism of VAE that leads to local orthogonalization and, in effect, to performing local PCA. Additionally, we demonstrated the functionality of this mechanism in intuitive terms, in formal terms, and also in experiments. We also explained why this behavior is desirable for enforcing disentangled representations.

Our insights show that VAEs make use of the differences in variance to form the representation in the latent space -- collapsing to PCA in the linear case.
This does not {\it directly} encourage factorized latent representations.
 With this in mind, it makes perfect sense that recent improvements of ($\beta$-)VAE \cite{tc-beta-vae, factor-vae, anonymous2019isa-vae} incorporate additional terms promoting precisely independence.

It is also unsatisfying that VAEs promote orthogonality somewhat indirectly. It would seem that designing architectures allowing explicit control over this feature would be beneficial.

\section*{Acknowledgements}
We thank the whole Autonomous Learning Group at MPI IS, as well as Friedrich Solowjow for the fruitful and invaluable discussions. Also, we thank the International Max Planck Research School for Intelligent Systems (IMPRS-IS) for supporting Dominik Zietlow.

\appendix

\twocolumn[%
    \begin{center}
      {\Large \bf Supplementary Material \vspace*{1em}}
    \end{center}
    ]

The supplementary information is structured as follows. We start with a remark on Table~\ref{tbl:polarized_regime} and then provide the proofs in Section~\ref{sec:app:proof:main}. Section~\ref{sec:app:exp} reports the details of the experiments followed by additional experiments in Section \ref{sec:app:addexp}.

\subsection*{Remark on Table \ref{tbl:polarized_regime}}
Some dataset-architecture combinations listed in Table \ref{tbl:l2} are omitted for the following reasons.

On the one hand, calculating the Disentanglement Score for MNIST and fMNIST does not make sense,
 as the generating factors are not given (the one categorical label cannot serve as replacement).
Consequently, as the values of $\beta$ are chosen according to this score, we do not report $\beta$-VAE numbers for these datasets.
On the other hand, for either synthetic task, the regular VAE vastly overprunes,
 see Figure~\ref{fig:app:sweep}, and the values become meaningless.

\section{Proofs}\label{sec:app:proofs}

\subsection{Proof of Theorem \ref{thm:main}}\label{sec:app:proof:main}

\paragraph{Proof strategy:} For part (b), we aim to derive a lower bound on the objective \eqref{eq:opt_problem}, that is independent from the optimization variables $\sigma^2_j(\xi)$ and $V_i$. Moreover, we show that this lower bound is tight for some specific choices of $\sigma^2_j(\xi)$ and $V_i$, \ie the global optima. For these choices, all $J_i$ will have orthogonal columns.

The strategy for part (a) is to show that whenever $\sigma^2_j(\xi)$ and $V_i$ do not induce a global optimum, we can find a small perturbation that decreases the objective function. Thereby showing that local minima do not exist.

\paragraph{Technical lemmas:}
We begin with introducing a few useful statements.
First is the inequality between arithmetic and geometric mean; a consequence of Jensen's inequality.

\begin{lemma}[AM-GM inequality] \label{lemma:amgm} Let $a_1$, \dots, $a_N$ be nonnegative real numbers. Then
\begin{align}
\frac{1}{N} \sum_{i=1}^{N} a_i \geq \left(\prod_{i=1}^{N} a_i \right)^{1/N}
\end{align}
with equality occuring if and only if $a_1 = a_2 = \cdots = a_n$.
\end{lemma}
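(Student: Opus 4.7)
The plan is to deduce the inequality from the strict concavity of the logarithm, which is exactly the route the surrounding text hints at when it calls the statement ``a consequence of Jensen's inequality''. First I would dispose of the degenerate case: if some $a_i = 0$, then the right-hand side is $0$ while the left-hand side is nonnegative, and equality holds iff every $a_i = 0$. So it suffices to treat the case $a_i > 0$ for all $i$.

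For the positive case, I would apply Jensen's inequality to the function $\log \colon (0,\infty) \to \mathbb{R}$, which is strictly concave since $(\log)''(x) = -1/x^2 < 0$. Jensen's inequality with uniform weights $1/N$ gives
\begin{equation*}
\log\!\left(\frac{1}{N}\sum_{i=1}^{N} a_i\right) \;\geq\; \frac{1}{N}\sum_{i=1}^{N} \log a_i \;=\; \log\!\left(\prod_{i=1}^{N} a_i\right)^{1/N}.
\end{equation*}
Exponentiating both sides (using that $\exp$ is strictly increasing) yields the AM-GM inequality.

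For the equality clause, I would invoke the equality case of Jensen's inequality for a strictly concave function: equality in Jensen holds iff the points $a_1,\dots,a_N$ at which the function is evaluated are almost-surely constant under the averaging measure, which here means $a_1 = a_2 = \cdots = a_N$. Conversely, if all $a_i$ are equal, both sides of the claimed inequality equal the common value, so equality holds. Combined with the zero case handled above, this gives the ``if and only if'' statement.

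The only real obstacle is whether one wishes to treat Jensen's inequality (and especially its equality condition for strictly concave functions) as a black box or prove it from scratch. Since the lemma is explicitly framed as a corollary of Jensen, I would take Jensen as given; otherwise, a self-contained alternative is Cauchy's forward-backward induction (prove the case $N=2^k$ by repeated doubling, then extend to arbitrary $N$ by padding with the arithmetic mean), which avoids appealing to any calculus at all. I would mention this alternative in a short parenthetical remark for completeness but carry out the Jensen-based argument as the main proof.
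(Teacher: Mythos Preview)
Your proposal is correct and matches the paper's own treatment: the paper does not give a proof at all, merely stating the lemma as ``a consequence of Jensen's inequality,'' which is precisely the route you take. Your handling of the degenerate (zero) case and the equality condition via strict concavity of $\log$ is sound, so nothing further is needed.
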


The second bound to be used is the classical Hadamard's inequality.

\begin{lemma}[Hadamard's inequality \cite{trefethen1997numerical}]\label{lemma:hadamard} Let $M \in \mathbb{R}^{k \times k}$ be non-singular matrix with column vectors $c_1$, \dots, $c_k$. Then
\begin{align}
\prod_{i=1}^{k} \|c_i\| \geq |\det M|
\end{align}
with equality if and only if the vectors $c_1$, \dots, $c_k$ are pairwise orthogonal.
\end{lemma}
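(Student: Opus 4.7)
The plan is to prove Hadamard's inequality via the QR decomposition, which gives both the inequality and the equality characterization in one stroke. The geometric intuition is exactly the one already invoked in Section \ref{sec:isolatingsourcesofvariance}: the left-hand side is the product of edge lengths of the parallelepiped spanned by $c_1,\dots,c_k$, while $|\det M|$ is its volume; the volume is bounded by the product of the edge lengths with equality precisely when the edges are mutually perpendicular. The QR approach formalizes this directly.

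Concretely, I would start by applying Gram-Schmidt (equivalently, QR decomposition) to write $M = QR$, where $Q \in \mathbb{R}^{k \times k}$ is orthogonal and $R \in \mathbb{R}^{k \times k}$ is upper triangular. Since $M$ is nonsingular, so is $R$, and we may assume by absorbing signs into $Q$ that $R_{ii} > 0$. Then
\begin{equation*}
|\det M| = |\det Q|\cdot|\det R| = \prod_{i=1}^{k} R_{ii}.
\end{equation*}
Next, writing $c_i = Q r_i$ with $r_i$ the $i$-th column of $R$, orthogonality of $Q$ yields $\|c_i\| = \|r_i\|$. Because $R$ is upper triangular, $\|r_i\|^2 = \sum_{j=1}^{i} R_{ji}^2 \geq R_{ii}^2$, so $\|c_i\| \geq R_{ii}$. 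Multiplying these $k$ inequalities gives $\prod_{i=1}^{k}\|c_i\| \geq \prod_{i=1}^{k} R_{ii} = |\det M|$, which is the desired bound.

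For the equality case, equality in $\|r_i\| \geq R_{ii}$ for every $i$ forces $R_{ji} = 0$ whenever $j < i$, so together with the existing constraint that $R$ is upper triangular this means $R$ is diagonal. Then $c_i = R_{ii}\, q_i$, where $q_i$ denotes the $i$-th column of $Q$, so the $c_i$ are scalar multiples of the orthonormal vectors $q_i$ and are therefore pairwise orthogonal. Conversely, if the $c_i$ are pairwise orthogonal (and nonzero, by nonsingularity), then $Q' := [c_1/\|c_1\|,\dots,c_k/\|c_k\|]$ is orthogonal and $M = Q' D$ with $D = \diag(\|c_1\|,\dots,\|c_k\|)$; by uniqueness of the QR factorization with positive diagonal, this forces $R = D$ and equality holds throughout.

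I do not anticipate a real obstacle: the only subtle point is being tidy with signs (choosing $R_{ii} > 0$ so that $|\det R|$ equals the product of diagonal entries and so that QR is unique), and making sure the equality analysis runs in both directions. An alternative route via the Gram matrix $M^{\T} M$ and the bound $\det A \leq \prod A_{ii}$ for positive definite $A$ would also work, but it ultimately reduces to the same Gram-Schmidt argument and obscures the volumetric picture, so I prefer the QR proof above.
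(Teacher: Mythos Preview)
Your QR-based proof is correct and complete, including the equality characterization in both directions. Note that the paper does not actually prove this lemma; it is stated with a citation to \cite{trefethen1997numerical} and used as a known result, so there is no in-paper proof to compare against.
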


And finally a simple lemma for characterizing matrices with orthogonal columns.

\begin{lemma}[Column orthogonality]\label{lemma:ortho} Let $M \in \mathbb{R}^{n \times d}$ be a matrix and let $M = U \Sigma V^\T$ be its singular value decomposition. Then the following statements are equivalent:
\begin{enumerate}\setlength\itemsep{0em}
\item[(a)] The columns of $M$ are (pairwise) orthogonal.
\item[(b)] The matrix $M^\T M$ is diagonal.
\item[(c)] The columns of $\Sigma V^\T$ are (pairwise) orthogonal.
\end{enumerate}
\end{lemma}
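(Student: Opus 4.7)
The plan is to establish the chain of equivalences via the Gram matrix $M^\T M$, which serves as the natural bridge between all three conditions.

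First, I would handle the equivalence (a) $\Leftrightarrow$ (b) by direct inspection. If $m_1,\dots,m_d$ denote the columns of $M$, then the $(i,j)$-entry of $M^\T M$ is simply the inner product $\langle m_i,m_j\rangle$. Thus $M^\T M$ is diagonal if and only if every off-diagonal inner product vanishes, which is the very definition of pairwise column orthogonality. This is essentially a one-line argument from the definition of matrix multiplication.

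For (b) $\Leftrightarrow$ (c), the key observation is that $U$ being orthogonal means $U^\T U = \mathcal{I}$, so
\begin{align*}
M^\T M = (U\Sigma V^\T)^\T(U\Sigma V^\T) = V\Sigma^\T U^\T U\Sigma V^\T = (\Sigma V^\T)^\T (\Sigma V^\T).
\end{align*}
That is, $M$ and $\Sigma V^\T$ share the same Gram matrix. Applying the already-established equivalence (a) $\Leftrightarrow$ (b) to the matrix $N := \Sigma V^\T$ in place of $M$ immediately gives that the columns of $\Sigma V^\T$ are pairwise orthogonal if and only if $N^\T N = M^\T M$ is diagonal, which is exactly condition (b).

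There is no real obstacle here; the lemma is essentially bookkeeping. The only subtlety worth noting is that one does not need $U$ to be square in any nontrivial sense, nor any assumption on the distinctness or nonvanishing of the singular values in $\Sigma$ --- the cancellation $U^\T U = \mathcal{I}$ is all that the SVD structure contributes. A clean presentation would first prove (a) $\Leftrightarrow$ (b) as a standalone fact about any matrix, then deduce (b) $\Leftrightarrow$ (c) by applying that fact to $\Sigma V^\T$ after observing the Gram-matrix identity above.
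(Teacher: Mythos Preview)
Your proposal is correct and matches the paper's own proof essentially line for line: the paper also calls (a) $\Leftrightarrow$ (b) immediate and then deduces the equivalence with (c) by observing $M^\T M = (\Sigma V^\T)^\T(\Sigma V^\T)$ and reapplying (a) $\Leftrightarrow$ (b). The only cosmetic difference is that the paper phrases the second step as (a) $\Leftrightarrow$ (c) rather than (b) $\Leftrightarrow$ (c).
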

\begin{proof} The equivalence of (a) and (b) is immediate. For equivalence of (a) and (c) it suffices to notice that if we set $M' = \Sigma V^\T$, then
\begin{align}
M'^\T M' = V \Sigma^\T \Sigma V^\T = M^\T M.
\end{align}
The equivalence of (a) and (b) now implies that $M$ has orthogonal columns if and only if $M'$ does.
\end{proof}

\paragraph{Initial considerations:}

First, without loss of generality, we will ignore all passive latent variables (in the sense of Definition \ref{def:polarized}). Formally speaking, we will restrict to the case when the local decoder mappings $J_i$ are non-degenerate (\ie have non-zero singular values). Now $d$ denotes the dimensionality of the latent space with $d=|V_a|$.

\medskip

Next, we simplify the loss $L_{\approx \text{KL}}$, Equation \ref{eq:kl_pol}. Up to additive and multiplicative constants, this loss can be, for a fixed sample $\xi \in X$, written as
\begin{align}
\|\mu(\xi)\|^2 + \sum_{j=1}^{d} {-}\log(\sigma^2_j(\xi)).
\end{align}
In the optimization problem (\ref{eq:opt_problem}, \ref{eq:constraint}) the values $\mu(\xi)$ can only be affected via applying an orthogonal transformation $V_i$.
But such transformation are norm-preserving (isometric) and hence the values $\|\mu(\xi)\|^2$ do not change in the optimization.
As a result, we can restate the constraint \eqref{eq:constraint} as
\begin{align} \label{eq:new_constraint}
\sum_{\xi \in X} \sum_{j=1}^{d} {-}\log(\sigma^2_j(\xi)) = C_1
\end{align}
for some constant $C_1$.

\paragraph{Proof of Theorem \ref{thm:main}(b):}

Here, we explain how Theorem \ref{thm:main}(b) follows from the following two propositions.

\begin{prop} \label{prop:ineq1} For a fixed sample $\xi \in X$ let us denote by $c_1$, \dots, $c_d$ the column vectors of $J_i$. Then
\begin{align}
\Eeps \|J_i\bm{\varepsilon}(\xi)\|^2 \geq d \left( \prod_{j=1}^d \|c_j\|^2 \sigma^2_j(\xi) \right)^{1/d}
\end{align}
with equality if and only if $\|c_j\|^2 \sigma^2_j(\xi) = \|c_k\|^2 \sigma^2_k(\xi)$ for every $j, k \in \{1, \dots, d\}$.
\end{prop}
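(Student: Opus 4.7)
The plan is to directly compute the expectation on the left-hand side, reducing it to a sum of the form $\sum_j \|c_j\|^2 \sigma_j^2(\xi)$, and then apply the AM-GM inequality (Lemma \ref{lemma:amgm}) term-by-term. This is exactly the bound that was already previewed heuristically on the linear example $M_1$ in Section \ref{sec:walkthrough}, so we are really just generalizing that calculation and then invoking AM-GM once.

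First I would expand $J_i\varepsilon(\xi) = \sum_{j=1}^d c_j \varepsilon_j(\xi)$, so that
\begin{equation*}
\|J_i\varepsilon(\xi)\|^2 = \sum_{j,k=1}^d \langle c_j, c_k\rangle \varepsilon_j(\xi)\varepsilon_k(\xi).
\end{equation*}
Taking expectations and using that, by \eqref{eq:sampling}, the coordinates $\varepsilon_j(\xi)$ are independent zero-mean Gaussians with $\E\varepsilon_j(\xi)^2 = \sigma_j^2(\xi)$, all cross terms $j\ne k$ vanish and we get
\begin{equation*}
\Eeps \|J_i\varepsilon(\xi)\|^2 = \sum_{j=1}^d \|c_j\|^2 \sigma_j^2(\xi).
\end{equation*}
This is the key identity; note that it crucially uses the diagonality of the covariance (the bit that breaks rotational invariance).

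Second, I would apply Lemma \ref{lemma:amgm} to the nonnegative numbers $a_j := \|c_j\|^2\sigma_j^2(\xi)$, which gives
\begin{equation*}
\sum_{j=1}^d \|c_j\|^2\sigma_j^2(\xi) \;\ge\; d\,\Bigl(\prod_{j=1}^d \|c_j\|^2 \sigma_j^2(\xi)\Bigr)^{1/d},
\end{equation*}
with equality in Lemma \ref{lemma:amgm} precisely when all the $a_j$ agree, i.e.\ when $\|c_j\|^2\sigma_j^2(\xi)=\|c_k\|^2\sigma_k^2(\xi)$ for all $j,k$. Chaining the two displays gives the claimed inequality together with the stated equality condition.

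There is no real obstacle: the only thing to be careful about is that $\|c_j\|^2 > 0$ for every $j$, which is guaranteed by the initial considerations in the proof (passive latents were discarded, so $J_i$ has no zero singular values and hence no zero columns on the relevant latent block), and that $\sigma_j^2(\xi) > 0$ is an optimization constraint in \eqref{eq:opt_problem}. Thus the AM-GM equality case is attainable and the bound is nontrivial. This proposition then sets up the next step of the overall strategy: combining this per-sample lower bound over $\xi\in X$, together with Hadamard's inequality (Lemma \ref{lemma:hadamard}) applied to the products $\prod_j \|c_j\|$, should yield a global lower bound independent of $V_i$ and $\sigma_j^2(\xi)$, whose tightness will force column-orthogonality of each $J_i$.
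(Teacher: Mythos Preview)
Your proposal is correct and follows essentially the same approach as the paper: establish the identity $\Eeps\|J_i\varepsilon(\xi)\|^2=\sum_{j=1}^d\|c_j\|^2\sigma_j^2(\xi)$ and then apply the AM--GM inequality (Lemma~\ref{lemma:amgm}). The only cosmetic difference is that the paper derives the identity by expanding over the rows of $J_i$ and regrouping, whereas you expand directly over the columns; both routes land on the same sum.
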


\begin{prop} \label{prop:ineq2} Let $M \in \mathbb{R}^{n \times d}$, where $d < n$, be a matrix with column vectors $c_1$, \dots, $c_d$ and nonzero singular values $s_1$, \dots, $s_d$. Then
\begin{align}
\prod_{j=1}^d \|c_j\| \geq \psdet(M),
\end{align}
where by $\psdet(M)$ we denote the product of the singular values of $M$.
Equality occurs if and only if $c_1$, \dots, $c_d$ are pairwise orthogonal.
\end{prop}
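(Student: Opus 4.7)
The plan is to reduce the non-square case to the square case so that Lemma~\ref{lemma:hadamard} (Hadamard's inequality) applies directly, and then to use Lemma~\ref{lemma:ortho} to convert the equality condition back into a statement about the columns of $M$.

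First I would introduce an SVD $M = U\Sigma V^\T$ with $U \in \mathbb{R}^{n \times n}$, $V \in \mathbb{R}^{d \times d}$ orthogonal and $\Sigma \in \mathbb{R}^{n \times d}$. Since $U$ is an isometry, the norms of the columns are unchanged if we strip it off: $\|c_j\| = \|U \Sigma V^\T e_j\| = \|\Sigma V^\T e_j\|$. Crucially, $\Sigma$ has only zeros in rows $d+1, \dots, n$, so the column $\Sigma V^\T e_j$ is nonzero only in its first $d$ entries. Writing $\Sigma_d \in \mathbb{R}^{d \times d}$ for the top diagonal block of $\Sigma$ (with entries $s_1,\dots,s_d$) and setting $W := \Sigma_d V^\T \in \mathbb{R}^{d \times d}$, this gives $\|c_j\| = \|W e_j\|$ for every $j$.

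Now $W$ is a genuine square matrix, and because $s_j > 0$ for all $j$ it is non-singular. Lemma~\ref{lemma:hadamard} therefore yields
\begin{align}
\prod_{j=1}^{d} \|c_j\| \;=\; \prod_{j=1}^{d} \|W e_j\| \;\geq\; |\det W| \;=\; |\det \Sigma_d|\cdot|\det V^\T| \;=\; \prod_{j=1}^{d} s_j \;=\; \psdet(M),
\end{align}
which is the claimed inequality.

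For the equality case, Lemma~\ref{lemma:hadamard} gives equality if and only if the columns of $W = \Sigma_d V^\T$ are pairwise orthogonal. Padding zeros at the bottom does not affect inner products, so the columns of $\Sigma V^\T$ are pairwise orthogonal exactly when the columns of $W$ are. Lemma~\ref{lemma:ortho} then states that the columns of $\Sigma V^\T$ are orthogonal if and only if the columns of $M$ are orthogonal, closing the loop. The only mild subtlety I foresee is being careful that $W$ is non-singular (so Lemma~\ref{lemma:hadamard}'s assumption is met) and that the reduction from $\Sigma V^\T$ to $W$ preserves both column norms and column inner products; both follow from the block structure of $\Sigma$, so no substantive obstacle remains.
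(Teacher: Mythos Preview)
Your proposal is correct and follows essentially the same route as the paper: both arguments strip off the left orthogonal factor $U$ via isometry, pass to the square $d\times d$ matrix $\Sigma_d V^\T$ (the paper calls it $M'$), apply Hadamard's inequality there, and then use Lemma~\ref{lemma:ortho} together with the zero-padding observation to transfer the equality condition back to $M$. The only cosmetic difference is that the paper phrases the first step as ``both sides are invariant under left multiplication by an orthogonal matrix'' before specializing to $M = \Sigma V^\T$, whereas you carry the SVD through explicitly.
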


First, Proposition \ref{prop:ineq2} allows making further estimates in the inequality from Proposition \ref{prop:ineq1}. Indeed, we get
\begin{align}
\Eeps \|J_i\bm{\varepsilon}(\xi)\|^2 \geq d \left( \left(\psdet(J_i)\right)^2  \prod_{j=1}^d \sigma^2_j(\xi) \right)^{1/d}
\end{align}
and after applying the (monotonous) log function we are left with
\begin{align}
\log \Eeps &\|J_i\bm{\varepsilon}(\xi)\|^2 \geq \\
&\log(d) + \frac{2}{d} \log(\psdet(J_i)) + \frac{1}{d} \sum_{j=1}^d \log(\sigma^2_j(\xi)).
\end{align}
Finally, we sum over the samples $\xi \in X$ and simplify via \eqref{eq:new_constraint} as
\begin{align}
\sum_{\xi \in X} &\log \Eeps \|J_i\bm{\varepsilon}(\xi)\|^2 \geq \nonumber \\
&N\log(d) - \frac{C_1}{d} + \frac{2}{d} \sum_{\xi \in X} \log(\psdet(J_i)) \label{ineq:final}.
\end{align}
The right-hand side of this inequality is independent from the values of $\sigma^2_j(\xi)$, as well as from the orthogonal matrices $V_i$, since these do not influence the singular values of any $J_i$.

Moreover, it is possible to make inequality \eqref{ineq:final} tight (\ie reach the global minimum), by setting $\sigma^2_j(\xi)$ as hinted by Proposition \ref{prop:ineq1} and by choosing the matrices $V_i$ such that every $J_i$ has orthogonal columns (this is clearly possible as seen in Proposition \ref{prop:equi_decoder}).

This yields the desired description of the global minima of \eqref{eq:opt_problem}. \qed

\paragraph{Proof of Proposition \ref{prop:ineq1}:}
We further denote by $r_1$, \dots, $r_n$ the row vectors of $J_i$, and by $a_{r,c}$ the element of $J_i$ at $r$-th row and $c$-th column. With sampling $\bm{\varepsilon}(\xi)$ according to
\begin{align}
\varepsilon(\xi) \sim\mathcal{N}\left(0,\diag{\sigma^2(\xi)}\right),
\end{align}
we begin simplifying the objective \eqref{eq:opt_problem} with
\begin{align}
 \Eeps \|J_i\bm{\varepsilon}(\xi)\|^2 &= \Eeps \sum_{k=1}^n \|r_k^\T \bm{\varepsilon}(\xi) \|^2 \\
 &= \sum_{k=1}^n \Eeps \|r_k^\T \bm{\varepsilon}(\xi) \|^2.
\end{align}
Now, as the samples $\bm{\varepsilon}(\xi)$ are zero mean, we can further write
\begin{align}
\sum_{k=1}^n \Eeps \|r_k^\T \bm{\varepsilon}(\xi) \|^2 = \sum_{k=1}^n \var (r_k^\T \bm{\varepsilon}(\xi)).
\end{align}
Now we use the fact that for uncorrelated random variables $A$ and $B$ we have $\var(A + cB) = \var A + c^2 \var B$. This allows to expand the variance of the inner product as
\begin{align}
\var (r_k^\T \bm{\varepsilon}(\xi)) &= \var \left(\sum_{j=1}^d a_{k, j} \bm{\varepsilon}_j(\xi)\right) \\
&= \sum_{j=1}^d  a^2_{k, j} \var \bm{\varepsilon}_j(\xi) = \sum_{j=1}^d  a^2_{k, j} \sigma^2_j(\xi).\nonumber
\end{align}

Now, we can regroup the terms via
\begin{align}
\sum_{k=1}^n \var (r_k^\T \bm{\varepsilon}(\xi)) &= \sum_{k=1}^n \sum_{j=1}^d  a^2_{k, j} \sigma^2_j(\xi) \nonumber \\
&= \sum_{j=1}^d \sum_{k=1}^n a^2_{k, j} \sigma^2_j(\xi)  \nonumber \\
&= \sum_{j=1}^d \|c_j\|^2 \sigma^2_j(\xi).
\end{align}
All in all, we obtain
\begin{align}
\Eeps \|J_i\bm{\varepsilon}(\xi)\|^2 = \sum_{j=1}^d \|c_j\|^2 \sigma^2_j(\xi).
\end{align}
from which the desired inequality follows via setting $a_j = \|c_j\|^2 \sigma^2_j(\xi)$ for $j = 1$, \dots, $d$ in Lemma \ref{lemma:amgm}. Indeed, then we have
\begin{align} \label{eq:am-gm-used}
\sum_{j=1}^d \|c_j\|^2 \sigma^2_j(\xi) \geq d \left( \prod_{j=1}^d \|c_j\|^2 \sigma^2_j(\xi) \right)^{1/d}
\end{align}
as required.
\qed

\paragraph{Proof of Proposition \ref{prop:ineq2}:}

As the first step, we show that both sides of the desired inequality are invariant to multiplying the matrix $M$ from the left with an orthogonal matrix $U \in \mathbb{R}^{n \times n}$.

For the right-hand side, this is clear as the singular values of $UM$ are identical to those of $M$. As for the left-hand side, we first need to realize that the vectors $c_j$ are the images of the canonical basis vectors $e_j$, \ie $c_j = Me_j$ for $j = 1, \dots, d$. But since $U$ is an isometry, we have $\|UMe_j\| = \|Me_j\| = \|c_j\|$ for every $j$, and hence also the column norms are intact by prepending $U$ to $M$.

This allows us to restrict to matrices $M$ for which the SVD has a simplified form $M = \Sigma V^\T$. Next, let us denote by $\Sigma_{d \times d}$ the $d \times d$ top-left submatrix of $\Sigma$. Note that $\Sigma_{d \times d}$ contains all nonzero elements of $\Sigma$. As a result, the matrix $M' = \Sigma_{d \times d} V^\T$ contains precisely the nonzero rows of the matrix $M$. This implies
\begin{align} \label{eq:equi_ortho}
M^\T M = M'^\T M'.
\end{align}
In particular, the column vectors $c'_j$ of $M'$ have the same norms as those of $M$.
Now we can write
\begin{align} \label{eq:hadam-used}
\prod_{j=1}^d \|c_j\| = \prod_{j=1}^d \|c'_j\| \geq |\det(M')| = \psdet(M),
\end{align}
where the inequality follows from Lemma \ref{lemma:hadamard} applied to nonsingular matrix $M'$. Equality in Lemma \ref{lemma:hadamard} occurs precisely if the columns of $M'$ are orthogonal. However, according to Lemma \ref{lemma:ortho} and \eqref{eq:equi_ortho}, it also follows that the columns of $M'$ are orthogonal if and only if the columns of $M$ are. Note that Lemma \ref{lemma:ortho}(c) is needed for covering the reduction performed in the first two paragraphs. \qed

\paragraph{Proof of Theorem \ref{thm:main}(a):}

We show the nonexistence of local minima as follows. For any values of $\sigma^2_j(\xi)$ and $V_i$ that do not minimize the objective function \eqref{eq:opt_problem}, we find a small perturbation that improves this objective.

All estimates involved in establishing inequality \eqref{ineq:final} rely on either Lemma \ref{lemma:amgm} or Lemma \ref{lemma:hadamard}, where in both cases, the right-hand side was kept fixed. We show that both of these inequalities can be tightened in such fashion by small perturbations in their parameters.

\begin{lemma}[Locally improving AM-GM] For any non-negative values $a_1$, \dots, $a_N$ for which
\begin{align} \label{equi:am-gm-not-eq}
\frac{1}{N} \sum_{i=1}^{N} a_i > \left(\prod_{i=1}^{N} a_i \right)^{1/N}
\end{align}
there exists a small perturbation $a'_i$ of $a_i$ for $i = 1, \dots, N$ such that
\begin{align}
\frac{1}{N} \sum_{i=1}^{N} a_i > \frac{1}{N}& \sum_{i=1}^{N} a'_i \geq \label{eq:ag-lemma1} \\
&\left(\prod_{i=1}^{N} a'_i \right)^{1/N} = \left(\prod_{i=1}^{N} a_i \right)^{1/N}  \label{eq:ag-lemma2}
\end{align}
\end{lemma}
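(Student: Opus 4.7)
The plan is to exploit the equality condition in AM--GM: strict inequality in \eqref{equi:am-gm-not-eq} forces the $a_i$ to be non-constant, so I can pick two indices $r, s$ with $a_r \neq a_s$ and perturb only those two entries. The idea is to move these two values closer together (toward their geometric mean) while exactly preserving their product; this fixes the overall product $\prod_i a_i$, and since $a_r \neq a_s$, the two-variable AM--GM tells us the sum $a_r + a_s$ strictly decreases.

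Concretely, I would assume first that all $a_i$ are strictly positive (I'll address the degenerate case separately) and define the one-parameter family $a_r(\epsilon) = a_r + \epsilon$, $a_s(\epsilon) = a_r a_s / (a_r + \epsilon)$, with the remaining $a_i$ unchanged. By construction $a_r(\epsilon)\, a_s(\epsilon) = a_r a_s$, so the product over all indices is invariant in $\epsilon$, giving the equality on the right of \eqref{eq:ag-lemma2}. For the sum, the derivative in $\epsilon$ at $\epsilon=0$ equals $1 - a_s/a_r$, which is nonzero since $a_r \neq a_s$; choosing the sign of $\epsilon$ appropriately (in particular $\epsilon > 0$ when $a_r < a_s$), one obtains a strict decrease of the sum for all sufficiently small $|\epsilon|$, which yields the strict inequality on the left of \eqref{eq:ag-lemma1}. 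The middle inequality in \eqref{eq:ag-lemma1}--\eqref{eq:ag-lemma2} is just the usual AM--GM applied to the perturbed tuple $(a'_1, \dots, a'_N)$, so it holds automatically.

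The degenerate case to check is when some $a_i$ vanishes; then $\prod_i a_i = 0$ and the hypothesis \eqref{equi:am-gm-not-eq} forces some other $a_j > 0$. In that situation I would simply take $a'_j = a_j - \epsilon$ and leave every other coordinate fixed: the product remains $0$, the sum strictly decreases, and the AM--GM inequality on the perturbed tuple is trivial. Together with the previous paragraph this covers every configuration satisfying \eqref{equi:am-gm-not-eq}.

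The main obstacle I anticipate is purely bookkeeping: making sure the chosen perturbation is (i) small enough to stay inside the non-negative orthant, (ii) truly strict in the sum, and (iii) exactly product-preserving so that the lower bound in \eqref{ineq:final} is not disturbed by the perturbation. The two-variable trick $a_r \mapsto a_r + \epsilon$, $a_s \mapsto a_r a_s/(a_r + \epsilon)$ discharges (iii) exactly, and (i)--(ii) follow from continuity once $a_r \neq a_s$. No deeper estimate is needed, which is the reason this local improvement lemma slots cleanly into the non-existence-of-local-minima argument for Theorem \ref{thm:main}(a): whenever the AM--GM step \eqref{eq:am-gm-used} is not tight, one can shift the $\sigma^2_j(\xi)\|c_j\|^2$ values by such a perturbation to strictly decrease the objective while preserving the constraint \eqref{eq:new_constraint}.
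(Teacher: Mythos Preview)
Your argument is correct and essentially identical to the paper's: both pick two unequal entries and perturb them so that the product is exactly preserved while the sum strictly decreases, with the paper using the multiplicative reparameterization $a'_i = a_i/(1+\delta)$, $a'_j = a_j(1+\delta)$ in place of your additive-with-compensation scheme. Your explicit treatment of the case where some $a_i = 0$ is a minor addition the paper leaves implicit.
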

\begin{proof}
  Since \eqref{equi:am-gm-not-eq} is a sharp inequality, we have $a_i > a_j$ for some $i \neq j$. Then setting $a'_i = a_i / (1+\delta)$, $a'_j = a_j(1+ \delta)$, and $a'_k =a_k$ otherwise, will do the trick.
  Indeed, we have $a_ia_j = a'_ia'_j$ as well as $a_i+a_j > a'_i+a'_j$ for small enough $\delta$. This ensures both \ref{eq:ag-lemma1} and \ref{eq:ag-lemma2}.
\end{proof}
An analogous statement for Lemma \ref{lemma:hadamard} has the following form.

\begin{lemma}[Locally improving Hadamard's inequality] Let $M \in \mathbb{R}^{k \times k}$ be a non-singular matrix with SVD $M = U\Sigma V^\T$, and column vectors $c_1$, \dots, $c_k$, for which
\begin{align} \label{equi:hadam-not-eq}
\prod_{i=1}^{k} \|c_i\| > |\det M|.
\end{align}
Then there exists an orthogonal matrix $V'$, a small perturbation of $V$, such that if we denote by $c'_1$, \dots, $c'_k$ the column vectors of $M' = U\Sigma V'^\T$, we have
\begin{align}
\prod_{i=1}^{k} \|c_i\| > \prod_{i=1}^{k} \|c'_i\|.
\end{align}
\end{lemma}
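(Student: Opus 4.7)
The plan is to realize $V'$ as $R_{ij}(\theta)\,V$, where $R_{ij}(\theta)$ is a Givens rotation in the coordinate plane $\mathrm{span}(e_i,e_j)$ of the latent space, for a well-chosen pair of indices $i\neq j$ and a small angle $\theta$. Since $V'$ is orthogonal, one has $|\det M'|=|\det M|$, so Hadamard's inequality does not obstruct any decrease of $\prod_i\|c'_i\|$. A direct computation with $M'(\theta)=U\Sigma V'(\theta)^\T = M\,R_{ij}(\theta)^\T$ shows that the columns of $M'(\theta)$ coincide with those of $M$ outside of positions $i$ and $j$, and in those positions they are the planar rotates $c'_i(\theta)=\cos\theta\,c_i-\sin\theta\,c_j$ and $c'_j(\theta)=\sin\theta\,c_i+\cos\theta\,c_j$. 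The problem therefore reduces to making $\|c'_i(\theta)\|\,\|c'_j(\theta)\|$ strictly smaller than $\|c_i\|\,\|c_j\|$ for some small $\theta$.

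The assumption \eqref{equi:hadam-not-eq} is a strict Hadamard inequality, and by the equality clause of Lemma \ref{lemma:hadamard} the columns of $M$ are therefore not pairwise orthogonal. I would pick $i\neq j$ with $D:=\langle c_i,c_j\rangle\neq 0$ and write $A:=\|c_i\|^2$, $B:=\|c_j\|^2$. Expanding the two perturbed squared norms in $\sin\theta,\cos\theta$ and differentiating the product $P(\theta)=\|c'_i(\theta)\|^2\,\|c'_j(\theta)\|^2$ at $\theta=0$ yields $P'(0)=2D(A-B)$. In the generic case $A\neq B$ this derivative is nonzero, so picking the sign of a small $\theta$ accordingly produces a first-order decrease of $P$, and hence the desired $V'$.

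The only real obstacle is the degenerate subcase $A=B$, where the first-order term vanishes and the naive perturbation analysis stalls. Fortunately an explicit simplification is available: when $A=B$ the two squared norms collapse to $A\mp D\sin(2\theta)$, so $P(\theta)=A^2-D^2\sin^2(2\theta)<A^2$ for every small nonzero $\theta$. Geometrically, if the two columns happen to have equal length then $\theta=0$ is a strict local maximum of the restricted product of their norms on the orbit of rotations, so any small rotation improves it. In either case, $V'(\theta)=R_{ij}(\theta)V$ for a sufficiently small $\theta$ of the correct sign provides the $V'$ required by the lemma.
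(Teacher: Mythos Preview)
Your argument is correct and follows essentially the same route as the paper: both perturb $V$ by a planar (Givens) rotation in the $(i,j)$-plane so that only the two non-orthogonal columns $c_i,c_j$ are affected, reducing the problem to the $2\times 2$ case. The paper phrases this reduction as an induction on $k$ and leaves the two-dimensional base case as ``can be verified directly'', whereas you carry out that verification explicitly, including the degenerate subcase $A=B$ where the first-order term vanishes; your treatment is therefore slightly more complete, but the underlying idea is the same.
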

\begin{proof}
We proceed by induction on $k$. For $k=2$, it can be verified directly that for some small $\delta$ (in absolute value) setting $V' = VR_{\delta}$, where $R_{\delta}$ is a 2D rotation matrix by angle $\delta$, achieves what is required.

For the general case, the sharp inequality \eqref{equi:hadam-not-eq} implies that $c_i^\T c_j \neq 0$ for some pair of $i \neq j$. Without loss of generality, let $i=1$, $j=2$. In such case, we consider $V' = VR^{2D}_{\delta}$, where
\begin{align}
R^{2D}_{\delta} =
\begin{pmatrix}
    \diagentry{R_{\delta}}\\
    &\diagentry{\Unit_{k-2}}
\end{pmatrix}
\end{align}
is a block diagonal matrix, in which $R_{\delta}$ is again a $2 \times 2$ rotation matrix. By design, we have $c_i = c'_i$ for $i > 2$. This, along with the fact that $U$ can be set to $\Unit_k$ (isometry does not influence either side of \eqref{equi:hadam-not-eq}), allows for a full reduction to the discussed two-dimensional case.
\end{proof}

It is easy to see that the performed perturbations continuously translate into perturbations of the parameters $\sigma^2_j(\xi)$ and $V_i$ in estimates \eqref{eq:am-gm-used} and \eqref{eq:hadam-used}. Consequently, any non-optimal values of $\sigma^2_j(\xi)$ and $V_i$ can be locally improved. This concludes the proof.

\subsection{Rotational invariances}

Let us start by fleshing out the common elements of the proofs of Propositions \ref{prop:log_invariant} and \ref{prop:elbo_invariant}. In both cases, the encoder and decoder mappings $\Enc_{,U}$, $\Dec_{,U}$ induce joint distributions $p_U(\mb{x}, \mb{z})$, $q_U(\mb{x}, \mb{z})$ described as
\begin{align}
p_U(\mb{x}, \mb{z}) &= p(\mb{z}) p(\mb{x} \mid U^\T \mb{z}) \\
q_U(\mb{x}, \mb{z}) &= q(\mb{x}) q(U^\T \mb{z} \mid \mb{x})
\end{align}

\begin{lemma}\label{lemma:p_preserved} For every $\xi \in X$ we have $p(\xi) = p_U(\xi)$.
\end{lemma}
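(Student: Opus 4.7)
The plan is to start from the definition of the marginal induced by the rotated pair, namely
\begin{align*}
p_U(\xi) = \int p(\mb{z})\, p(\xi \mid U^\T \mb{z}) \, \dd\mb{z},
\end{align*}
and reduce this integral to the corresponding expression for the unrotated pair by a change of variables on the latent space.

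Specifically, I would substitute $\mb{z}' = U^\T \mb{z}$, equivalently $\mb{z} = U \mb{z}'$. Since $U$ is orthogonal, this map is a bijection of $\mathbb{R}^d$ onto itself and its Jacobian has absolute determinant $1$, so $\dd\mb{z} = \dd\mb{z}'$. After substitution the integrand becomes $p(U\mb{z}')\, p(\xi \mid \mb{z}')$. Here I invoke the standing hypothesis of the proposition being served, namely rotational symmetry of the prior, to rewrite $p(U\mb{z}') = p(\mb{z}')$. The integral then reads $\int p(\mb{z}')\, p(\xi \mid \mb{z}')\, \dd\mb{z}' = p(\xi)$, which is the desired identity.

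There is no real obstacle: the argument is a one-line change of variables, and the only subtlety is being explicit about where each assumption enters. Orthogonality of $U$ plays two roles (unit Jacobian and bijectivity on $\mathbb{R}^d$), while rotational symmetry of $p(\mb{z})$ is used exactly once to cancel the factor of $U$ applied to $\mb{z}'$. Note that no property of the decoder $p(\mb{x} \mid \cdot)$ beyond the induced form of $p_U(\mb{x}, \mb{z})$ given in the setup is required, which is what makes the lemma a useful common ingredient for the proofs of both Propositions~\ref{prop:log_invariant} and \ref{prop:elbo_invariant}.
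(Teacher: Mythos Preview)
Your proposal is correct and matches the paper's proof essentially line for line: both marginalize $p_U(\xi)$, perform the change of variables $\mb{z}' = U^\T\mb{z}$ (with unit Jacobian since $U$ is orthogonal), and then invoke rotational symmetry of the prior to conclude. The only difference is that you spell out the roles of orthogonality and rotational symmetry slightly more explicitly than the paper does.
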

\begin{proof}
We simply compute
\begin{align*}
p_U(\xi) &= \int p_U(\xi, \mb{z}) \dd \mb{z} \\
&= \int p(\mb{z}) p(\xi \mid U^\T \mb{z}) \dd \mb{z} \\
&= \int p(U\mb{z}) p(\xi \mid \mb{z}) \dd \mb{z} \\
&= \int p(\mb{z}) p(\xi \mid \mb{z}) \dd \mb{z} = p(\xi),
\end{align*}
where in the third equality we used the Change of Variable Theorem to substitute $U\mb{z}$ for $\mb{z}$ (keep in mind that $|\det(U)| = 1$ as $U$ is an orthogonal matrix). In the fourth equality, we used the rotational symmetry of the prior $p(\mb{z})$.
\end{proof}

\proofof{Proposition \ref{prop:log_invariant}}{
This immediately follows from Lemma \ref{lemma:p_preserved}.}

\proofof{Proposition \ref{prop:elbo_invariant}}{
We utilize the full identity from ELBO derivation. For fixed $\xi \in X$ we have \cite{KingmaWelling2014:VAE}
\begin{align}\label{eq:elbo_full}
\mathrm{ELBO} = \KL{q_U(\mb{z} \mid \xi)}{p_U(\mb{z} \mid \xi)} + \log p_U(\xi)
\end{align}
In order to prove invariance of ELBO to the choice of $U$, it suffices to prove invariance of the right-hand side of \eqref{eq:elbo_full}. Due to Proposition \eqref{prop:elbo_invariant} we only need to focus on the KL term. Similarly as in the proof of Lemma \ref{lemma:p_preserved}, we calculate
\begin{align*}
\qquad&\KL{q_U(\mb{z} \mid \xi)}{p_U(\mb{z} \mid \xi)} \\
&= \int q_U(\mb{z} \mid \xi) \log \frac{q_U(\mb{z} \mid \xi)}{p_U(\mb{z} \mid \xi)}\dd \mb{z} \\
&= \int q_U(\mb{z} \mid \xi) \log \frac{q_U(\mb{z} \mid \xi)\cdot  p_U(\xi)}{p_U(\mb{z}) \cdot p_U(\xi \mid \mb{z})}\dd \mb{z} \\
&\stackrel{(3)}{=} \int q(U^\T \mb{z} \mid \xi) \log \frac{q(U^\T \mb{z} \mid \xi)\cdot  p(\xi)}{p(\mb{z}) \cdot p(\xi \mid U^\T \mb{z})}\dd \mb{z} \\
&\stackrel{(4)}{=} \int q(\mb{z} \mid \xi) \log \frac{q(\mb{z} \mid \xi)\cdot  p(\xi)}{p(\mb{Uz}) \cdot p(\xi \mid \mb{z})}\dd \mb{z} \\
&\stackrel{(5)}{=} \int q(\mb{z} \mid \xi) \log \frac{q(\mb{z} \mid \xi)\cdot  p(\xi)}{p(\mb{z}) \cdot p(\xi \mid \mb{z})}\dd \mb{z} \\
&= \int q(\mb{z} \mid \xi) \log \frac{q(\mb{z} \mid \xi)}{p(\mb{z} \mid \xi)}\dd \mb{z} \\
&= \KL{q(\mb{z} \mid \xi)}{p(\mb{z} \mid \xi)},
\end{align*}
where we again used the Change of Variable Theorem in equality (4), rotational symmetry of $p(\mb{z})$ in equality (5), and Lemma \ref{lemma:p_preserved} in equality (3).
}

\subsection{Other proofs}

\proofof{Proposition \ref{prop:equi_decoder}}{
Recall from Lemma \ref{lemma:ortho} that column orthogonality of $M$ is equivalent to $M^\T M$ being a diagonal matrix.

\noindent (b) $\Rightarrow$ (a): Let $M = U\Sigma V^\T$ where $|V|$ is a permutation matrix. Then
\begin{align} \label{eq:svd_rewrite}
M^\T M = V \Sigma^\T U^\T U \Sigma V^\T = V \Sigma' V^\T
\end{align}
where $\Sigma' = \Sigma^\T \Sigma$ is a diagonal matrix. But then $V \Sigma' V^\T$ only permutes the diagonal entries of $\Sigma'$ (and possibly flips their signs). In particular, $V \Sigma' V^\T$ is also diagonal.

\smallskip

\noindent (a) $\Rightarrow$ (b): Let again $M = U\Sigma V^\T$ be some SVD of $M$ and assume $M^\T M = D$ for some diagonal matrix $D$. Since $M$ has $d$ distinct nonzero singular values, $M^\T M$ has $d$ distinct nonzero eigenvalues (diagonal elements). Moreover, these eigenvalues are precisely the squares of the singular values captured by $\Sigma$. Next, if we denote by $P$ the permutation matrix for which $PDP^{-1}$ has decreasing diagonal elements, we can write
\begin{align} \label{eq:perm_sing_vals}
PDP^{-1} = \Sigma^\T \Sigma
\end{align}
Then using \eqref{eq:perm_sing_vals} and the SVD of $M$  similarly as in \eqref{eq:svd_rewrite}, we obtain
\begin{align}
D = M^\T M = V\Sigma^\T \Sigma V^\T = VPDP^{-1}V^\T.
\end{align}
Further, the resulting identity $(VP)D = D(VP)$ implies that columns of $VP$ are eigenvectors of $D$, \ie the canonical basis vectors. Since $VP$ is additionally orthogonal, these eigenvectors are normalized. It follows that $|VP|$ is a permutation matrix and the conclusion follows.

}

\proofof{Proposition \ref{prop:loss_decomp}}{
First, note that for any random variable $\mb{X} \in \mathbb{R}^k$ with $\mathbb{E} \mb{X} = \mb{\mu}$ and a constant $\mb{b} \in \mathbb{R}^k$, the following identity holds
\begin{align}
\E \|\mb{X} - \mb{b}\|^2 = \E \|\mb{X} - \mb{\mu}\|^2 + \|\mb{\mu} - \mb{b}\|^2.
\end{align}
In our case, we set $\mb{X} = \Dec(\Enc(\xi))$, the unbiasedness assumption translates to $\mathbb{E} \mb{X} = \Dec(\mu(\xi))$, and finally we set $\mb{b} = \xi$.

The identity we obtain, is exactly what was required to prove.
}

\section{Experimental details}\label{sec:app:exp}

\subsection{Disentanglement Score}\label{sec:app:mig}
As introduced in the paper, for disentangled representations, single latent variables should be sensitive to individual generating factors and insensitive to all others. To quantify this behavior, for each generating factor $w_i$, all latent variables are evaluated for their sensitivity to $w_i$. The sensitivity difference between the two most responsive variables then reflects both desired properties; the sensitivity of the associated best matching latent variable and also the insensitivity of all others. A set of quantities capturing disentanglement can therefore be described as
\begin{align}
\label{eq:disent}
\mathrm{Disent.}& = \frac{1}{N_{\mathrm{labels}}} \sum_{i=1}^{N} \left( \frac{A_{i,m(i)} - A_{i,s(i)}}{M_i} \right) \\
&\mathrm{for}\quad m(i) = \argmax_l \left(A_{i,l}\right)\\
&\mathrm{for}\quad s(i) = \argmax_{k \neq m(i)} \left(A_{i,k}\right),
\end{align}
where $A_{i, j}$ is some sort of sensitivity measure of latent variable $z_j$ with respect to the generating factor $w_i$ and $M_i$ is a normalization constant, ensuring the summands fall into the interval $(0, 1)$.

The recently proposed Mutual Information Gap (MIG)~\cite{tc-beta-vae} uses the Mutual Information as a measure of how the latent variables depend on the generating factors.
For the normalisation, the entropy of the generating factor is used.
\begin{align}
A_{i,j} &= \mi(w_i, z_j)\\
M_i &= \mathrm{H}(w_i)
\end{align}
For discrete generating factors $\{w_i\}$, the normalization with the entropy $H(w_i)$, binds the MIG to the $(0, 1)$ interval, as expected.
For continuous generating factors on the other side, this does not hold. In fact, differential entropy can be zero or even negative and no good normalization is possible.

To treat this shortcoming, we introduce the slightly modified \emph{Disentanglement score} such that it comprises continuous and discrete variables alike.
Rather than using mutual information measurements, we employ powerful nonlinear regressors and classifiers for the two different classes of latent variables.
The predictability of a generating factor from a given latent coordinate indirectly reflects how much information the two share.

Accordingly, we define the Disentanglement score as in Equation \ref{eq:disent} by defining $A_{i,j}$ as the prediction performance of the regressor/classifier for predicting generating factor $w_i$ from the latent coordinate $z_j$. The normalization factor is then the performance of the best constant classifier/regressor. In case of regression with mean square error, this is simply the standard deviation of the generative factor.

More precisely,
\begin{align}
A_{i,j} = \begin{cases}
   \sqrt{\var(\vphantom{|}w_i)} -  \sqrt{\vphantom{|}\mathrm{mse}_{z_j \rightarrow w_i}}, & \text{for regression}\\
   \text{accuracy}_{z_j \rightarrow w_i}, & \text{for classification}\\
  \end{cases}
\end{align}

and
\begin{align*}
M_i = \begin{cases}
    \sqrt{\vphantom{|}\var(w_i)}, & \text{for regression}.\\
    \mathrm{accuracy}_{z_j \rightarrow w_i}^\text{const}, & \text{for classification}.\\
  \end{cases}
\end{align*}

We used the SciPy implementation of a $k$-nearest-neighbors classifier and regressor with default settings (\eg $k=5$) to measure the Disentanglement Score. The regressor/classifier was trained on 80\% of the test data and evaluated on the remaining 20\%.

\begin{table*}\centering{
\renewcommand{\cellalign}{tl}
\caption{Overview over the used datasets and network architectures. The nonlinearities are only applied in the hidden layers. Biases are used for all datasets.}
\label{tbl:params}
\begin{tabular}{|l|llccc|}
\hline
 & \textbf{\makecell{Optimizer\\(LR)}} & \textbf{Architecture} & \textbf{Latent Dim.} & \textbf{Epochs} & $\mathbf{\beta}$\\\hline\hline
\textbf{dSprites} & AdaGrad & \textbf{Enc:} $1200 - 1200$ (Relu) & $5$ & $50$ & $4$\\
& ($10^{-2}$) & \textbf{Dec:} $1200 - 1200 - 1200$ (Tanh) & & &\\
\textbf{Synth. Lin.} & Adam  & \textbf{Enc:} No hidden Layers (Lin) & $2$ & $600$ & $10^{-4}$\\
& ($10^{-3}$) & \textbf{Dec:} No hidden Layers (Lin) & & &\\
\textbf{Synth. Non-Lin.} & Adam & \textbf{Enc:} $60 - 40 - 20$ (Tanh) & $2$ & $600$ & $10^{-3}$\\
& ($10^{-3}$) & \textbf{Dec:} $60 - 40 - 20$ (Tanh) & & &\\
\textbf{MNIST} & AdaGrad & \textbf{Enc:} $400$ (Relu) & $6$ & $400$ & $1$\\
& ($10^{-2}$) & \textbf{Dec:} $500 - 500$ (Tanh) & & &\\
\textbf{fMNIST} & AdaGrad & \textbf{Enc:} $400$ (Relu) & $6$ & $500$ & $1$\\
& ($10^{-2}$) & \textbf{Dec:} $500 - 500$ (Tanh) & & & \\
\textbf{CelebA} & Adam & Conv/Deconv: $[$number of kernels, kernel size, stride$]$ & $32$ & $50$ & $4$\\
 & ($10^{-4}$) & \makecell{\textbf{Enc:} $[[32,4,2],[32,4,2], [64,4,2], [64,4,2]]$ \\ (Relu)} &  &  & \\
&  & \makecell{\textbf{Dec:} $[[64], [64,4,2], [32,4,2], [32,4,2], [3,4,2]]$ \\(Relu), first layer is connecting MLP} & & & \\
 \hline
\end{tabular}}
\end{table*}

\subsection{DtO via Integer Programming}\label{sec:app:procrutes}
The \emph{Distance to Orthogonality} (DtO) describes the Frobenius norm of the difference between a matrix $V$ and its closest signed permutation matrix $P(V)$.
Using mixed-integer linear programming (MILP) formulation, we find the closest permutation matrix as the optimum $P^*$ of the following optimization problem
\begin{align}
\min_{P} &\sum_{i, j} | V_{i, j} - P_{i, j} | & \\ \nonumber
\mathrm{s.t.} \quad & P_{i,j} \in \{-1, 0, 1\} \quad &\forall\ (i, j) \\ \nonumber
& \sum_i |P_{i,j}| = 1 \quad &\forall\ j \\ \nonumber
& \sum_j |P_{i,j}| = 1 \quad &\forall\ i.
\end{align}
Producing a clean MILP formulation, with purely linear objective and binary integer values, can be achieved with a standard technique; introducing new variables.
In particular, we set
\begin{align}
&P_{i, j} = P_{i, j}^+ - P_{i, j}^- \\ \nonumber
&\mathrm{for} \quad P_{i, j}^+, P_{i, j}^- \in \{0, 1\}  \quad \forall\ (i, j)
\end{align}
and introduce (continuous) variables for the differences $V_{i, j} - P_{i, j}$
\begin{align}
V_{i, j} - P_{i, j} &\leq D_{i, j} \quad &\forall\ (i, j) \\
P_{i, j} - V_{i, j} &\leq D_{i, j} \quad &\forall\ (i, j). \nonumber
\end{align}
The final formulation then is

\begin{align}
\min_{P} \sum_{i, j} D_{i, j}\quad\quad& \\ \nonumber
\mathrm{s.t.} \quad (P_{i, j}^+ - P_{i, j}^-) - V_{i, j} &\leq D_{i, j} \quad &\forall\ (i, j) \\ \nonumber
V_{i, j} - (P_{i, j}^+ - P_{i, j}^-) &\leq D_{i, j} \quad &\forall\ (i, j)\\ \nonumber
\sum_i \left( P^+_{i,j} + P^-_{i,j} \right) &= 1 \quad &\forall\ j \\ \nonumber
\sum_j \left( P^+_{i,j} + P^-_{i,j} \right) &= 1 \quad &\forall\ i.
\end{align}

\subsection{$\beta$-VAE with Full Covariance Matrix}\label{sec:app:fullVAE}
In the derivation of the VAE loss function, the approximate posterior is set to be a multivariate normal distribution with a diagonal covariance matrix. The claim of the paper is that this diagonalization is responsible for the orthogonalization.
As one of the control experiments in Section \ref{sec:experiments} we also implemented VAE with a full covariance matrix.

Two issues now need to be addressed; computing KL divergence in closed form and adapting the reparametrization trick.
Regarding the former, the sought identity is
\begin{align}
\KL{\mathcal{N}(\mu, \Sigma)}{\mathcal{N}(0, \mathcal{I}_k)} = \\
\frac{1}{2} \left( \|\mu \|^2 + \tr\left( \Sigma \right) - \log \det \Sigma - k \right).
\end{align}
As for the reparametrization trick, if $\varepsilon \sim \mathcal{N}(0, \mathcal{I}_k)$, it is easy to check that
\begin{align}
\mu + \Sigma^{1/2}\varepsilon \sim \mathcal{N}(\mu, \Sigma),
\end{align}
where $\Sigma = \Sigma^{1/2}\cdot \left(\Sigma^{1/2}\right)^\T$ is the unique Cholesky decomposition of the positive definite matrix $\Sigma$.

\begin{figure*}[!htb]
\begin{center}
\includegraphics[width=\textwidth]{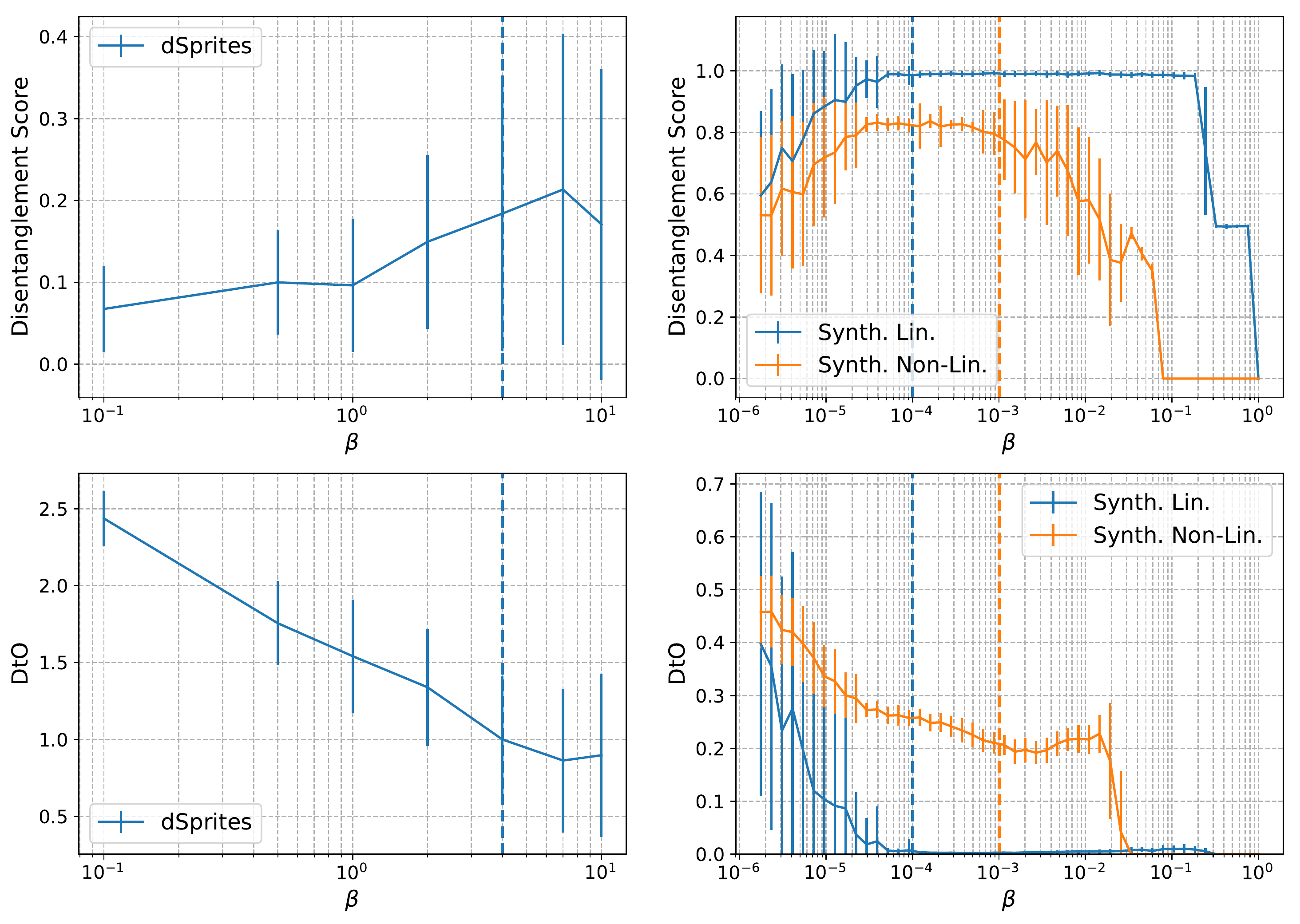}
\caption{The $\beta$ hyper-parameter in the $\beta$-VAE allows to trade-off reconstruction error and the KL loss such that the desired amount of disentanglement is achieved. The plots show the Disentanglement Score (top) and the DtO (bottom) for dSprites (left) and synthetic datasets (right). The dashed lines indicate the parameter chosen for the experiments.
  \label{fig:app:sweep}}
\end{center}
\end{figure*}

\subsection{Network Details and Training} \label{sec:app:training}
Table \ref{tbl:params} contains the training parameters used for the different architectures. The listed latent dimension is chosen to be the number of independent generating factors, if applicable, and chosen large enough to ensure decent reconstruction loss on all architectures.

All reported numbers are calculated using a previously unseen test dataset. To facilitate this, we split the whole datasets randomly into three parts for training, evaluation and test (containing $80\,\%$, $10\,\%$ and $10\,\%$ of all samples respectively). During development, we use the evaluation dataset, for the final reports we use the test dataset.

\subsection{Synthetic Datasets} \label{sec:app:synth_datasets}
The linear synthetic dataset is generated with a transformation $f_\mathrm{lin} \colon \mathbb{R}^2 \rightarrow \mathbb{R}^3$, mapping a unit square $V = [0, 1]^2$ to a 3-dimensional space. The transformation can be decomposed into:
\begin{enumerate}
\item stretching along one axis by a fixed factor of 2, 
\item trivial embedding into $\mathbb{R}^3$,
\item rotation of $45^\circ$ along the line containing the vector $\left( 1, -1, 1\right)$.
\end{enumerate}

For the non-linear dataset, the transformation $f_\mathrm{non-lin} \colon \mathbb{R}^2 \rightarrow \mathbb{R}^{6}$ is realized by a random initialization of a MLP with one hidden layer (width $10$), biases and $\mathrm{tanh}$ nonlinearitites.

Both datasets consist of $50000$ samples.

\section{Additional Experiments}\label{sec:app:addexp}

\subsection{Dependence of Disentanglement Score and DtO on $\beta$}\label{sec:app:ortho}
The choice of $\beta$ depends on the achievable Disentanglement Score. Figure \ref{fig:app:sweep} shows a more thorough analysis of the dependence of both the Disentanglement Score and the DtO.
For too small values of $\beta$, the effect of the KL term (and thus the orthogonalization) is negligible.
In the other extreme case, too large values values of $\beta$ result in overpruning, such that the number of active latent coordinates drops below the number of generating factors.

\subsection{Degenerate case }\label{sec:app:degeneracy}
\begin{figure}[h!]
\begin{center}
\includegraphics[width=\columnwidth]{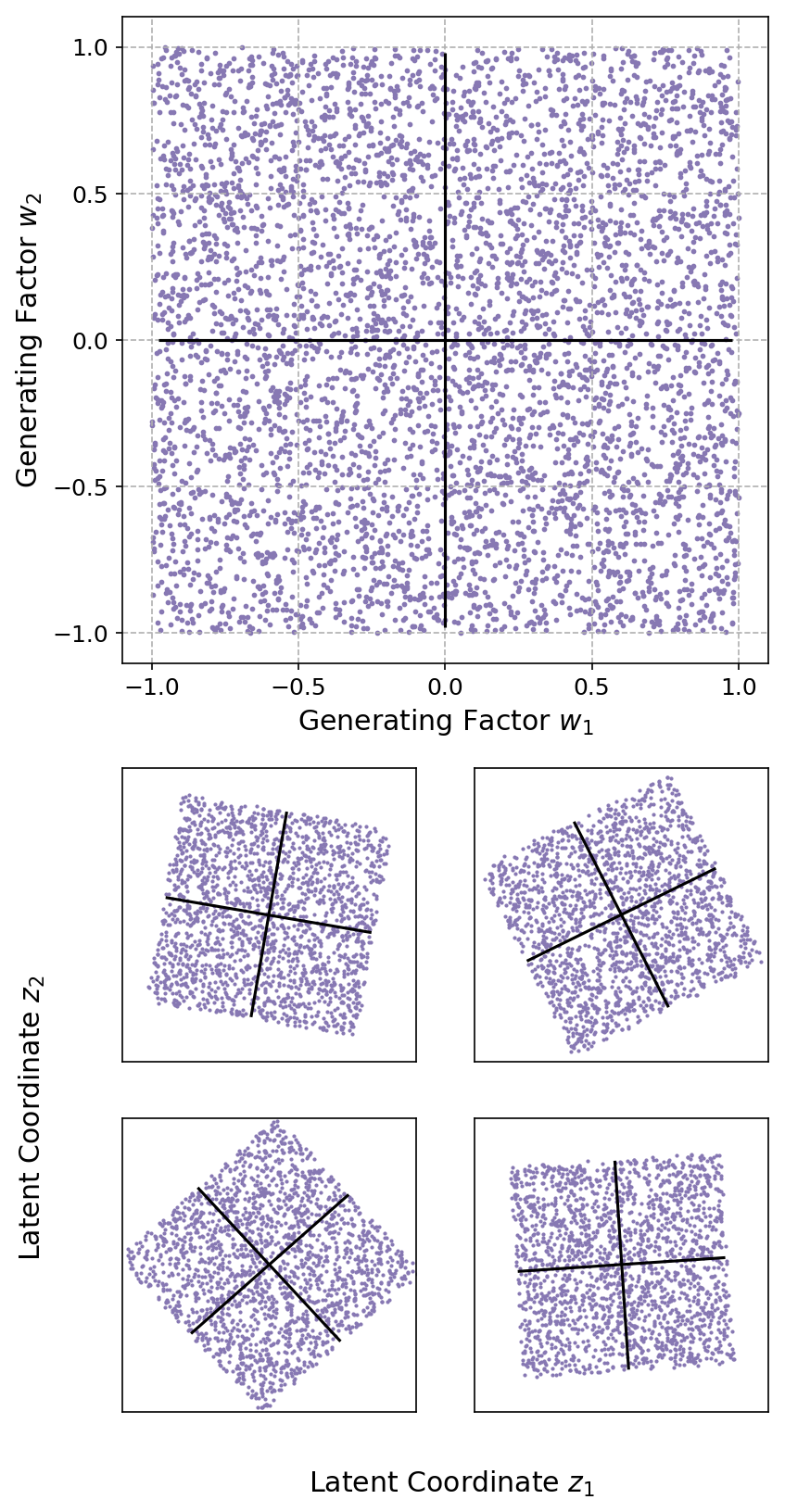}
\caption{For strong degeneracy, e.g.\ in the synthetic dataset with the two generating factors $w_1$ and $w_2$ on equal, uniform scale (top), the linear $\beta$-VAE generates arbitrarily rotated latent representations (bottom) here for the linear synthetic dataset. \label{fig:app:degen_case}}
\end{center}
\end{figure}

Proposition \ref{prop:equi_decoder} insists that the locally linearized decoder
 have distinct singular values, otherwise orthogonality of the column vectors does not translate into preserving axes. Here, we design an experiment showing, that this condition is also relevant in practice.

The dataset in question will be a version of the linear synthetic task where the generating factors have the same scaling, as visualized in the upper plot of Figure~\ref{fig:app:degen_case}.
Note that any linear encoder applying a simple rotation has both orthogonal columns and equal singular values. But it does not respect the alignment of the original square, as it does not meet the assumptions of Proposition \ref{prop:equi_decoder}.

Behavior of the $\beta$-VAE with a linear encoder/decoder network is consistent with this.
The bottom part of Figure~\ref{fig:app:degen_case} shows $\beta$-VAE latent representations of four random restarts; they expose random alignments.
The same effect results in high variances for both the Disentanglement Score and the DtO, as shown in Table \ref{tbl:degen}.

This degeneracy also occurs for PCA. It is easy to check that \emph{any} projection of a unit square on a line \emph{has equal variance}. Hence the greedy PCA algorithm has no preference over which alignment to choose, and the practical choice of alignment is implementation dependent.

This insight reinforces our point that $\beta$-VAE (just like PCA) looks for sources of variance rather than for statistical independence.

We can also see in Table \ref{tbl:degen}, that the degeneracy disappears even for small rescaling of the ground truth factors. Since $\beta$-VAE promotes normalized latent representations (zero mean, unit variance), the singular values will no longer be equal and the right alignment is found. The same is true for PCA.

\begin{table}\centering{
\caption{Overview of Disentanglement Score and DtO for different ratios of importance between the generating factors for the Synth.~Lin.\ task. A ratio of $1.2$ means one generating factor is scaled by $1.2$. \label{tbl:degen}}
\begin{tabular}{|c|ccc|}\hline
\textbf{Ratio} & $1.0$ & $1.2$ & $1.5$ \\\hline\hline
\textbf{Disent.} & $0.51 \pm 0.28$ & $0.76 \pm 0.25$ & $0.98 \pm 0.06$\\
\textbf{DtO} & $0.49 \pm 0.32$ & $0.20 \pm 0.24$ & $0.01 \pm 0.06$
\\\hline
\end{tabular}
}
\end{table}

\begin{figure}[h!]
\begin{center}
\includegraphics[width=\columnwidth]{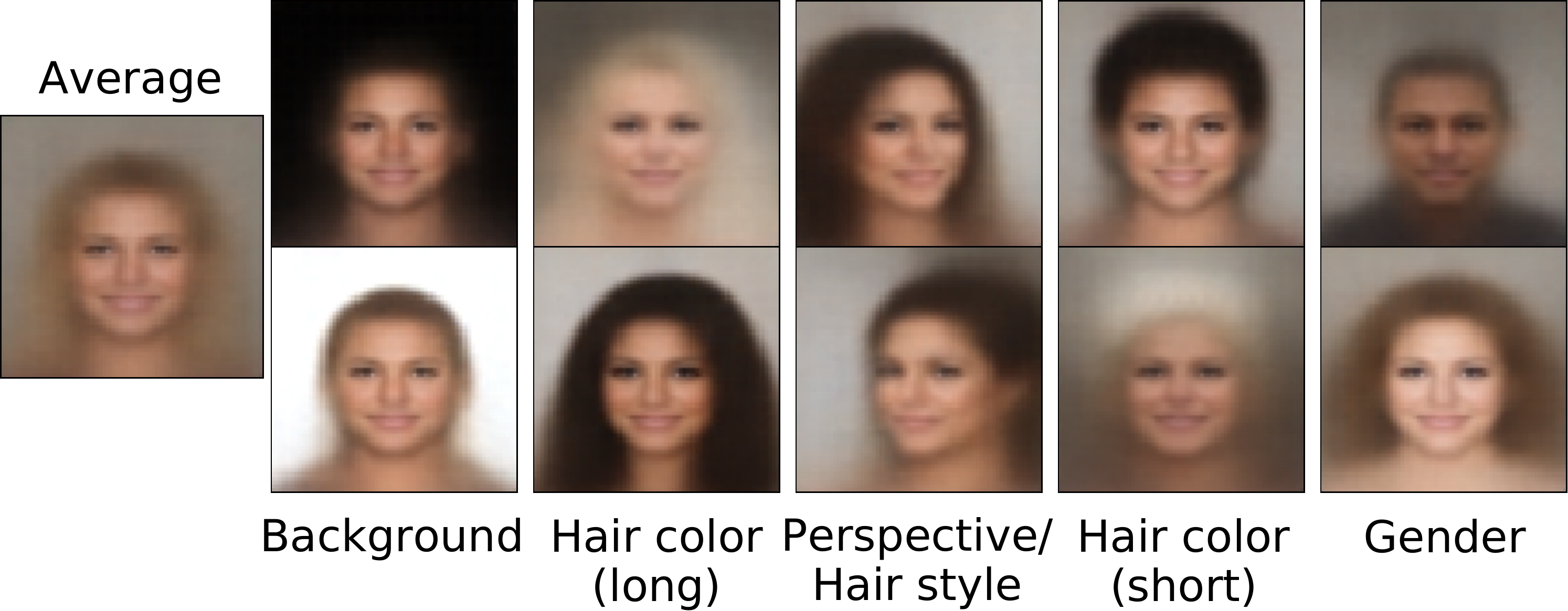}
\caption{For strong degeneracy, e.g.\ in the synthetic dataset with the two generating factors $w_1$ and $w_2$ on equal, uniform scale (top), the linear $\beta$-VAE generates arbitrarily rotated latent representations (bottom) here for the linear synthetic dataset. \label{fig:app:eigenfaces}}
\end{center}
\end{figure}
\subsection{Non-Linear VAE Eigenfaces}
\label{sup:sec:eigenfaces}
In order to highlight the connection with PCA, we use $\beta$-VAE to produce a non-linear version of the classical eigenfaces~\cite{turk1991eigenfaces} on the CelebA dataset~\cite{liu2015faceattributes}.
Fig \ref{fig:app:eigenfaces} shows a discrete latent traversal. 
 
Starting from the latent representation $z_\mathrm{mean}$ of the mean face (over $300$ randomly selected datapoints) we feed $\{ \mathbf{z}_\mathrm{mean} \pm \alpha \mathrm{\mathbf{e}}_{i}\}$ through the decoder, where $\mathrm{\mathbf{e}}_i$ are the canonical base vectors. 
Particularly, we chose $i$ covering the first $5$ latent coordinates, sorted by the mean $\sigma_j$. 
The parameter $\alpha = 2.5$ was empirically chosen to be on near the tails of the distribution over $\mathbf{z}^k$.

We can see that unlike classical eigenfaces that mostly reflect photometric properties, the `nonlinear eigenfaces'' capture also semantic features of the data. Note also that the ordering of the `principal components' by the mean values of $\sigma_j$ is naturally justified by our work. As was illustrated in Sec. \ref{sec:isolatingsourcesofvariance} of the paper,
the first $\beta$-VAE `principle components' also focus on characteristics with high impact on the reconstruction loss (\ie capture the most variance), 

Details about the architecture used are listed in Tab. \ref{tbl:params}.

{\small
\bibliographystyle{ieee_fullname}
\bibliography{bibliography}
}
\end{document}